\title{Regularization Path of \\ Cross-Validation Error Lower Bounds}
\date{June 22, 2015}
\author{
Atsushi Shibagaki\\
Nagoya Institute of Technology \\
Nagoya, Japan \\
\texttt{shibagaki.a.mllab.nit@gmail.com} \\
\and
Yoshiki Suzuki \\
Nagoya Institute of Technology \\
Nagoya, Japan \\
\texttt{suzuki.mllab.nit@gmail.com} \\
\and
Masayuki Karasuyama \\
Nagoya Institute of Technology \\
Nagoya, Japan \\
\texttt{karasuyama@nitech.ac.jp} \\
\and
Ichiro Takeuchi\thanks{Corresponding author} \\
Nagoya Institute of Technology \\
Nagoya, Japan \\
\texttt{takeuchi.ichiro@nitech.ac.jp} \\
}
\begin{document}

\maketitle

\begin{abstract}
Careful tuning of
a \emph{regularization parameter}
is indispensable
in many machine learning tasks
because it has a significant impact
on generalization performances.
Nevertheless,
current practice of regularization parameter tuning
is more of an art than a science,
e.g.,
it is hard to tell how many grid-points would be needed
in cross-validation (CV)
for obtaining a solution with sufficiently small CV error.
In this paper
we propose a novel framework
for computing
a lower bound of the CV errors
as a function of the regularization parameter,
which we call
\emph{regularization path of CV error lower bounds}.
The proposed framework
can be used
for providing
a theoretical approximation guarantee
on a set of solutions
in the sense that
how far the CV error of the current best solution could be away from best possible CV error
in the entire range of the regularization parameters.
We demonstrate through numerical experiments that
a theoretically guaranteed choice of a regularization parameter in the above sense
is possible
with reasonable computational costs.


\end{abstract}

  \section{Introduction}
 \label{sec:introduction}
 \vspace{-0.5em}
 Many machine learning tasks
 involve careful tuning
 of \emph{a regularization parameter}
 that controls the balance between
 an empirical loss term
 and
 a regularization term.
 A regularization parameter
 is usually selected
 by comparing the cross-validation (CV) errors
 at several different regularization parameters.
 Although
 its choice
 has a significant impact
 on the generalization performances,
 the current practice is still more of an art than a science.
 For example,
 in commonly used grid-search,
 it is hard to tell
 how many grid points we should search over
 for obtaining sufficiently small CV error.

 In this paper
 we introduce a novel framework
 for a class of regularized binary classification problems
 that can compute
 \emph{a regularization path of CV error lower bounds}.
 For an $\veps \in [0, 1]$,
 we define
 \emph{$\veps$-approximate regularization parameters}
 to be a set of regularization parameters
 such that
 the CV error of the solution at the regularization parameter
 is guaranteed to be no greater by
 $\veps$
 than the best possible CV error
 in the entire range of regularization parameters.
 Given a set of solutions
 obtained,
 for example,
 by grid-search,
 the proposed framework allows us to provide
 a theoretical guarantee of the current best solution
 by explicitly quantifying
 its approximation level
 $\veps$
 in the above sense.
 Furthermore,
 when a desired approximation level
 $\veps$
 is specified,
 the proposed framework can be used
 for efficiently finding
 one of the $\veps$-approximate regularization parameters.

 The proposed framework is built on a novel CV error lower bound
 that can be represented as a function of the regularization parameter,
 and
 this is why we call it as
 a regularization path of CV error lower bounds.
 For computing a path,
 no special optimization algorithm is needed.
 We only need to have a finite number of solutions
 obtained by any algorithms.
 It is thus easy to apply our framework
 to common regularization parameter tuning strategies
 such as grid-search or {\it Bayesian optimization}.
 Furthermore,
 the proposed framework can be used
 not only with exact optimal solutions
 but also with sufficiently good approximate solutions,
 which is computationally advantageous
 because
 completely solving an optimization problem is often much more costly
 than obtaining a reasonably good approximate solution.

 Our main contribution in this paper
 is to show that
 a theoretically guaranteed choice of a regularization parameter
 in the above sense
 is possible
 with reasonable computational costs.
 To the best of our knowledge,
 there is no other existing methods
 for providing  such a theoretical guarantee on CV error
 that can be used as generally as ours.
 \figurename~\ref{fig:illust} illustrates the behavior of the algorithm
 for obtaining $\veps = 0.1$ approximate regularization parameter
 (see \S\ref{sec:exp} for the setup).

 \begin{figure}
  \floatbox[{\capbeside\thisfloatsetup{capbesideposition={right,top},capbesidewidth=95mm}}]{figure}[\FBwidth]
  { \hspace*{-7mm}
  \caption{
    An illustration of the proposed framework.
    One of our algorithms presented in \S\ref{sec:algorithm}
 automatically selected
    39
    regularization parameter values
    in
    $[10^{-3}, 10^3]$,
    and
    an upper bound of the validation error
    for each of them
    is obtained
    by solving an optimization problem approximately.
    Among those
    39
    values,
   the one with the smallest validation error upper bound
   (indicated as $\bigstar$ at $C = 1.368$)
   is guaranteed to be
   \emph{$\veps(= 0.1)$ approximate regularization parameter}
   in the sense that
   the validation error for the regularization parameter is no greater
   by $\eps$
   than the smallest possible validation error in the whole interval
   $[10^{-3}, 10^3]$.
   See \S\ref{sec:exp}
   for the setup
   (see also \figurename~\ref{fig:ion_sec6} for the results with other options).
  }\label{fig:illust}}
  {
  \hspace*{-6mm}
  \includegraphics[width=0.35\textwidth]{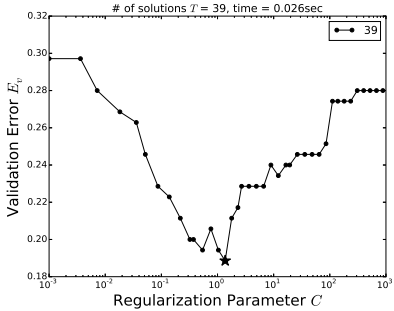}
  }
 \vspace*{-5mm}
 \end{figure}
 \vspace{-1em}


\paragraph{Related works}

\emph{Optimal regularization parameter}
can be found
if
its \emph{exact regularization path}
can be computed.
Exact regularization path
has been intensively studied
\cite{Efron2004,Hastie2004},
but
they are known to be numerically unstable
and
do not scale well.
%
Furthermore,
exact regularization path
can be computed
only for a limited class of problems
whose solutions are written as piecewise-linear functions of
the regularization parameter
\cite{Rosset2007}.
Our framework is much more efficient
and
can be applied to wider classes of problems
whose exact regularization path cannot be computed.
This work was motivated
by recent studies on
approximate regularization path
\cite{Giesen2012b,Giesen2012a,Giesen2014,Mairal2012}.
These approximate regularization paths
have a property that
the objective function value
at each regularization parameter value
is no greater
by $\veps$
than the optimal objective function value
in the entire range of regularization parameters.
Although
these algorithms are
much more stable and efficient
than
exact ones,
for the task of tuning a regularization parameter,
our interest is
not
in objective function values
but in CV errors.
Our approach is more suitable for regularization parameter tuning tasks
in the sense that
the approximation quality is guaranteed in terms of CV error.

As illustrated in \figurename~\ref{fig:illust},
we only compute
a finite number of solutions,
but still provide
approximation guarantee
in the whole interval of the regularization parameter.
To ensure such a property,
we need to introduce a novel CV error lower bound
that is sufficiently tight and represented as a monotonic function of the regularization parameter.
Although
several CV error bounds
(mostly for leave-one-out CV)
of
SVM and other similar learning frameworks
exist
(e.g., \cite{Vapnik2000,Joachims2000,Chung2003,Lee2004}),
none of them satisfy the above required properties.
The idea of our CV error bound
is inspired from recent studies on
\emph{safe screening}
\cite{ElGhaoui2012,Xiang2011,Ogawa2013,Liu2014,Wang2014}
(see Appendix~\ref{app:proofs} for the detail).
Furthermore,
we emphasize that
our contribution is
\emph{not}
in presenting a new generalization error bound,
but in
introducing
a practical framework for providing a theoretical guarantee
on the choice of a regularization parameter.
Although generalization error bounds
such as structural risk minimization
\cite{Vapnik96}
might be used
for a rough tuning of a regularization parameter,
they are known to be too loose to use as an alternative to CV
(see, e.g., \S11 in \cite{shalev-shwartz2014}).
We also note that
our contribution is \emph{not} in presenting new method for regularization parameter tuning
such as
Bayesian optimization
\cite{Snoek2012},
random search
\cite{Bergstra2012}
and
gradient-based search
\cite{Chapelle2002}.
As we demonstrate in experiments,
our approach can provide a theoretical approximation guarantee
of the regularization parameter selected by these existing methods.

\section{Problem Setup}
\label{sec:problem-setup}
We consider
linear binary classification problems.
Let
$\{(x_i, y_i) \in \RR^d \times \{-1, 1\}\}_{i \in [n]}$
be the training set
where
$n$ is the size of the training set,
$d$ is the input dimension,
and
$[n] := \{1, \ldots, n\}$.
An independent held-out validation set with size $n^\prime$
is denoted similarly as
$\{(x^\prime_i, y^\prime_i) \in \RR^d \times \{-1, 1\}\}_{i \in [n^\prime]}$.
A linear decision function is written as
$f(x) = w^\top x$,
where
$w \in \RR^d$
is a vector of coefficients,
and
$^\top$
represents the transpose.
We assume
the availability of
a held-out validation set
only for simplifying the exposition.
All the proposed methods
presented in this paper
can be straightforwardly adapted to
a cross-validation setup.
Furthermore,
the proposed methods can be
\emph{kernelized}
if the loss function
satisfies a certain condition.
In this paper
we focus
on the following class of
regularized convex loss minimization problems:
\begin{align}
 \label{eq:the-class-of-problems}
 w^*_C :=
 \arg \min_{w \in \RR^d}
 ~
 \frac{1}{2} \|w\|^2 + C \sum_{i \in [n]} \ell(y_i, w^\top x_i),
\end{align}
where
$C > 0$
is the regularization parameter,
and
$\|\cdot\|$
is the Euclidean norm.
%
The loss function is denoted as
$\ell: \{-1, 1\} \times \RR \to \RR$.
We assume that
$\ell(\cdot, \cdot)$ is convex and subdifferentiable in the 2nd argument.
Examples of such loss functions include
logistic loss,
hinge loss,
Huber-hinge loss,
etc.
%
%
For notational convenience,
we denote the individual loss as
$\ell_i(w) := \ell(y_i, w^\top x_i)$
for all
$i \in [n]$.
The optimal solution
for the regularization parameter
$C$
is explicitly denoted as
$w^*_C$.
We assume that the regularization parameter
is defined in a finite interval
$[C_\ell, C_u]$,
e.g.,
$C_\ell = 10^{-3}$
and
$C_u = 10^3$
as we did in the experiments.

For a solution
$w \in \RR^d$,
the validation error\footnote{
For simplicity,
we regard
a validation instance whose score
is exactly zero,
i.e.,
$w^\top x^\prime_i = 0$,
is correctly classified
in
\eq{eq:valid-error}.
Hereafter,
we assume that there are no validation instances whose input vector is completely 0,
i.e.,
$x^\prime_i = 0$,
because those instances are always correctly classified
according to the definition in
\eq{eq:valid-error}.
}
is defined as
 \begin{align}
  \label{eq:valid-error}
  E_v(w)
  :=
  \frac{1}{n^\prime}
  \sum_{i \in [n^\prime]}
  I(y^\prime_i w^\top x^\prime_i < 0),
 \end{align}
where
$I(\cdot)$
is the indicator function.
%
In this paper, we consider two problems.
In the first problem,
given a set of (either optimal or approximate) solutions
$w^*_{C_1}, \ldots, w^*_{C_T}$
at
$T$ different regularization parameters
$C_1, \ldots, C_T \in [C_\ell, C_u]$,
we compute the approximation level
$\veps$
such that
\begin{align}
 \label{eq:apprx.quality}
 \min_{C_t \in \{C_1, \ldots, C_T\}}
 \!\!\!\!\!   \!\!\!\!\!
 E_v(w^*_{C_t})
 -
 E_v^*
 \le
\veps,
 ~~~\text{where}~~~
 E_v^*
 :=
  \!\!\!\!\!
 \min_{C \in [C_l, C_u]}
  \!\!\!\!\!
 E_v(w^*_{C}).
\end{align}
In the second problem,
we find
an $\veps$-approximate regularization parameter
within an interval
$C \in [C_l, C_u]$,
which is defined as an element of the following set
 \begin{align*}
  \cC(\veps)
  :=
  \Big\{
  C \in [C_l, C_u]
  ~\Big|~
  E_v(w^*_C) -
  E_v^*
  \le \veps
  \Big\}.
 \end{align*}
Both of these two problems can be solved
by using our proposed framework
for computing
a path of validation error lower bounds.


\section{Validation error lower bounds as a function of regularization parameter}
\label{sec:theory}
In this section,
we derive
a validation error lower bound
which is represented
as a function of the regularization parameter
$C$.
Our basic idea
is to compute a lower and an upper bound
of the inner product score
$w^{*\top}_C x^\prime_i$
for each validation input
$x^\prime_i, i \in [n^\prime]$,
as a function of the regularization parameter
$C$.
For computing the bounds of
$w^{*\top}_C x^\prime_i$,
we use a solution
(either optimal or approximate)
for a different regularization parameter
$\tilde{C} \neq C$.

\subsection{Score bounds}
\label{subsec:with-approx}
We first describe how to obtain a lower and an upper bound
of inner product score
$w^{*\top}_C x_i^\prime$
based on an approximate solution
$\hat{w}_{\tilde{C}}$
at a different regularization parameter $\tilde{C} \neq C$.

\begin{lemm}
 \label{lemm:decision-score-bounds-with-approximate-solutions}
 Let
 $\hat{w}_{\tilde{C}}$
 be an approximate solution
 of the problem
 \eq{eq:the-class-of-problems}
 for a regularization parameter value $\tilde{C}$
 and $\xi_i(\hat{w}_C)$ be a subgradient of $\ell_i$ at
 $w = \hat{w}_C$
 such that
 a subgradient of the objective function is
 \begin{align}
  \label{eq:gradient}
  g(\hat{w}_{\tilde{C}})
 :=
 \hat{w}_C
 +
 \tilde{C}
 \sum_{i \in [n]}
 \xi_i (\hat{w}_C)
 .
\end{align}
 Then,
 for any $C > 0$,
 the score
 $w^{*\top}_C x^\prime_i, i \in [n^\prime],$
 satisfies
 \begin{subequations} 
  \label{eq:DS-approx}
 \begin{align}
  \label{eq:DS-LB-approx}
  w^{*\top}_C x^\prime_i \! \ge \!
  {LB}(w^{*\top}_C x^\prime_i | \hat{w}_{\tilde{C}})
  \! := \!
  \mycase{ \!\!\!
  \phantom{+} \alpha(\hat{w}_{\tilde{C}}, x^\prime_i)
  -
  \frac{1}{\tilde{C}}
  (
  \beta(\hat{w}_{\tilde{C}}, x^\prime_i)
  +
  \gamma(g(\hat{w}_{\tilde{C}}), x^\prime_i)
  ) C,
  {\rm if~}
  C > \tilde{C},
  \\ \! \! \!
  - \beta(\hat{w}_{\tilde{C}}, x^\prime_i)
  +
  \frac{1}{\tilde{C}}
  (
  \alpha(\hat{w}_{\tilde{C}}, x^\prime_i) + \delta(g(\hat{w}_{\tilde{C}}), x^\prime_i)
  ) C,
  {\rm if~}
  C < \tilde{C},
  }
 \end{align}
\begin{align}
  \label{eq:DS-UB-approx}
  w^{*\top}_C x^\prime_i \! \le \!
   {UB}(w^{*\top}_C x^\prime_i | \hat{w}_{\tilde{C}})
  \! := \!
  \mycase{ \! \! \!
  -\beta(\hat{w}_{\tilde{C}}, x^\prime_i)
  +
  \frac{1}{\tilde{C}}
  (
  \alpha(\hat{w}_{\tilde{C}}, x^\prime_i)
  +
  \delta(g(\hat{w}_{\tilde{C}}), x^\prime_i)
  ) C,
  {\rm if~}
  C > \tilde{C},
  \\ \!\!\!
  \phantom{+}\alpha(\hat{w}_{\tilde{C}}, x^\prime_i)
  -
  \frac{1}{\tilde{C}}
  (
  \beta(\hat{w}_{\tilde{C}}, x^\prime_i) + \gamma(g(\hat{w}_{\tilde{C}}), x^\prime_i)
  ) C,
  {\rm if~}
  C < \tilde{C},
  }
 \end{align}
 \end{subequations}
 where
 \begin{align*}
  &
  \alpha(w^*_{\tilde{C}}, x^\prime_i)
  := \frac{1}{2}
  (\|w^*_{\tilde{C}}\| \|x^\prime_i\| + w^{*\top}_{\tilde{C}} x^\prime_i) \ge 0,
  ~
  \gamma(g(\hat{w}_{\tilde{C}}), x^\prime_i)
  := \frac{1}{2}
  (\|g(\hat{w}_{\tilde{C}})\| \|x^\prime_i\| + g(\hat{w}_{\tilde{C}})^\top x^\prime_i) \ge 0,
  \\
  &
  \beta(w^*_{\tilde{C}}, x^\prime_i)
  := \frac{1}{2}
  (\|w^*_{\tilde{C}}\| \|x^\prime_i\| - w^{*\top}_{\tilde{C}} x^\prime_i) \ge 0,
  ~
  \delta(g(\hat{w}_{\tilde{C}}), x^\prime_i)
  := \frac{1}{2}
  (\|g(\hat{w}_{\tilde{C}})\| \|x^\prime_i\| - g(w_{\tilde{C}})^\top x^\prime_i) \ge 0.
 \end{align*}
\end{lemm}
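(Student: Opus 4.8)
The plan is to reduce the claim to a purely geometric statement: that the exact optimum $w^*_C$ lies in an explicit Euclidean ball determined by $\hat{w}_{\tilde{C}}$ and its objective subgradient $g(\hat{w}_{\tilde{C}})$, after which the two score bounds fall out of Cauchy--Schwarz. First I would record the two first-order facts available. Since $w^*_C$ is an \emph{exact} minimizer of the $C$-problem \eq{eq:the-class-of-problems}, optimality gives $\sum_i \xi_i(w^*_C) = -\tfrac1C w^*_C$ for a suitable selection of subgradients, while the definition of $g$ in \eq{eq:gradient} rearranges to $\sum_i \xi_i(\hat{w}_{\tilde{C}}) = \tfrac{1}{\tilde{C}}(g(\hat{w}_{\tilde{C}}) - \hat{w}_{\tilde{C}})$. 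The only extra convex-analytic input is monotonicity of the subdifferential of the convex map $\sum_i \ell_i$, i.e. $(\sum_i \xi_i(w_1) - \sum_i \xi_i(w_2))^\top(w_1 - w_2) \ge 0$.

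Next I would apply this monotonicity to the pair $(w_1,w_2) = (w^*_C, \hat{w}_{\tilde{C}})$, substitute the two subgradient identities above, and clear denominators by multiplying through by $-C\tilde{C} < 0$ (which reverses the inequality). Completing the square in $w^*_C$ converts the resulting quadratic inequality into a ball constraint $\|w^*_C - m_C\|^2 \le r_C^2$ with center $m_C = \tfrac12\big((1+\tfrac{C}{\tilde{C}})\hat{w}_{\tilde{C}} - \tfrac{C}{\tilde{C}}\,g(\hat{w}_{\tilde{C}})\big)$ and radius $r_C = \tfrac12\big\|\tfrac{C}{\tilde{C}}\,g(\hat{w}_{\tilde{C}}) + (1-\tfrac{C}{\tilde{C}})\hat{w}_{\tilde{C}}\big\|$. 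This is exactly the sphere-localization that underlies safe screening, which is the connection the paper advertises. Given the ball, Cauchy--Schwarz yields $m_C^\top x'_i - r_C\|x'_i\| \le w^{*\top}_C x'_i \le m_C^\top x'_i + r_C\|x'_i\|$ for every validation point.

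It remains to make these two bounds explicit and affine in $C$, which is where the case distinction enters. The center term $m_C^\top x'_i$ is already affine in $C$, but $r_C$ is the norm of a $C$-dependent combination of $g(\hat{w}_{\tilde{C}})$ and $\hat{w}_{\tilde{C}}$, hence not affine. I would bound it by the triangle inequality $\|\tfrac{C}{\tilde{C}}g + (1-\tfrac{C}{\tilde{C}})\hat{w}_{\tilde{C}}\| \le \tfrac{C}{\tilde{C}}\|g\| + |1 - \tfrac{C}{\tilde{C}}|\,\|\hat{w}_{\tilde{C}}\|$ and resolve the absolute value according to the sign of $1 - \tfrac{C}{\tilde{C}}$, i.e. according to whether $C > \tilde{C}$ or $C < \tilde{C}$. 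Substituting back, expanding $\hat{w}_{\tilde{C}}^\top x'_i$, $g^\top x'_i$, $\|\hat{w}_{\tilde{C}}\|\|x'_i\|$ and $\|g\|\|x'_i\|$, and regrouping each $\tfrac12(\|\cdot\|\|x'_i\| \pm \langle\cdot,x'_i\rangle)$ into the nonnegative quantities $\alpha,\beta,\gamma,\delta$ produces the claimed piecewise-affine expressions in \eq{eq:DS-LB-approx} and \eq{eq:DS-UB-approx}.

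I expect the main obstacle to be this last linearization rather than the derivation of the ball. Because $r_C$ is a norm, the bounds are genuinely only piecewise-affine, and two points need care: which of the two affine functions plays the role of the lower bound and which the upper bound \emph{flips} as $C$ crosses $\tilde{C}$ (indeed the $C>\tilde{C}$ branch of \eq{eq:DS-LB-approx} and the $C<\tilde{C}$ branch of \eq{eq:DS-UB-approx} are the same linear function, and vice versa); and the sign bookkeeping in the triangle-inequality step must be carried out so that $\alpha,\beta,\gamma,\delta$ enter with the correct signs and remain nonnegative. Checking this branch-by-branch accounting, together with the behaviour at the crossover $C=\tilde{C}$, is the delicate part, whereas the convex-analysis core is short.
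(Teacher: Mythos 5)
Your proposal is correct and follows essentially the same route as the paper's proof: optimality plus monotonicity of the subdifferential of $\sum_i \ell_i$ yields the same quadratic inequality, completing the square gives exactly the ball with the center $m_C$ and radius $r_C$ you state, and the score bounds then follow by optimizing the linear functional over that ball and linearizing the radius via the triangle inequality with the case split at $C=\tilde{C}$. The only cosmetic differences are that the paper phrases optimality as a variational inequality (Proposition~\ref{prop:VI}) rather than the zero-subgradient identity, and derives the ball extrema by a Lagrangian computation (Proposition~\ref{prop:ball}) rather than by Cauchy--Schwarz; these are equivalent here.
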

The proof is presented in
Appendix~\ref{app:proofs}.
Lemma
\ref{lemm:decision-score-bounds-with-approximate-solutions}
tells that
we have a lower and an upper bound of the score
$w^{*\top}_C x^\prime_i$
for each validation instance
that linearly change with
the regularization parameter $C$.
When
$\hat{w}_{\tilde{C}}$
is optimal,
it can be shown that
(see Proposition B.24 in \cite{Bertsekas1999})
there exists a subgradient such that
$g(\hat{w}_{\tilde{C}}) =0$,
meaning that
the bounds are tight
because
$\gamma(g(\hat{w}_{\tilde{C}}), x_i^\prime)$
$=$
$\delta(g(\hat{w}_{\tilde{C}}), x_i^\prime)$
$=0$.

\begin{coro}
 \label{coro:DSB-property1}
 When
 $C = \tilde{C}$,
 the score
 $w^{*\top}_{\tilde{C}} x^\prime_i, i \in [n^\prime],$
 for the regularization parameter value
 $\tilde{C}$
 itself
 satisfies
 \begin{align*}
  w^{*\top}_{\tilde{C}} \! x^\prime_i \! \ge \!\!
  {LB}(w^{*\top}_{\tilde{C}} \! x^\prime_i | \hat{w}_{\tilde{C}})
  \! = \!
  \hat{w}^{\top}_{\tilde{C}} x^\prime_i
  \! - \!
  \gamma(g(\hat{w}_{\tilde{C}}), x^\prime_i), ~
  w^{*\top}_{\tilde{C}} \! x^\prime_i \! \le \!
  {UB}(w^{*\top}_{\tilde{C}} x^\prime_i | \hat{w}_{\tilde{C}})
  \! = \!
  \hat{w}^{\top}_{\tilde{C}}  x^\prime_i
  \! + \!
  \delta(g(\hat{w}_{\tilde{C}}), x^\prime_i).
 \end{align*}
\end{coro}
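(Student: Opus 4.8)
The plan is to treat the corollary as the $C=\tilde{C}$ specialization of Lemma~\ref{lemm:decision-score-bounds-with-approximate-solutions}. Since the bounds in \eq{eq:DS-approx} are stated for the two open branches $C>\tilde{C}$ and $C<\tilde{C}$, the first task is to decide how to evaluate them at the common endpoint $C=\tilde{C}$, and then to simplify the resulting linear-in-$C$ expressions.

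First I would take the lower and upper bounds from the $C>\tilde{C}$ branch, namely $LB = \alpha(\hat{w}_{\tilde{C}},x_i^\prime) - \tfrac{1}{\tilde{C}}\bigl(\beta(\hat{w}_{\tilde{C}},x_i^\prime)+\gamma(g(\hat{w}_{\tilde{C}}),x_i^\prime)\bigr)C$ and $UB = -\beta(\hat{w}_{\tilde{C}},x_i^\prime) + \tfrac{1}{\tilde{C}}\bigl(\alpha(\hat{w}_{\tilde{C}},x_i^\prime)+\delta(g(\hat{w}_{\tilde{C}}),x_i^\prime)\bigr)C$, and substitute $C=\tilde{C}$. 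The factor $C/\tilde{C}$ collapses to $1$, so each $\tfrac{1}{\tilde{C}}(\cdot)\,C$ term reduces to $(\cdot)$, giving $LB=\alpha-\beta-\gamma$ and $UB=\alpha-\beta+\delta$ (arguments suppressed). I would then invoke the elementary identity $\alpha(\hat{w}_{\tilde{C}},x_i^\prime)-\beta(\hat{w}_{\tilde{C}},x_i^\prime)=\hat{w}^\top_{\tilde{C}}x_i^\prime$, which is immediate from the definitions: the $\tfrac12\|\hat{w}_{\tilde{C}}\|\|x_i^\prime\|$ parts cancel and the $\pm\tfrac12\hat{w}^\top_{\tilde{C}}x_i^\prime$ parts add. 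This yields exactly $LB=\hat{w}^\top_{\tilde{C}}x_i^\prime-\gamma(g(\hat{w}_{\tilde{C}}),x_i^\prime)$ and $UB=\hat{w}^\top_{\tilde{C}}x_i^\prime+\delta(g(\hat{w}_{\tilde{C}}),x_i^\prime)$, the claimed expressions.

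The one point that needs care, and the main obstacle, is justifying the use of the $C>\tilde{C}$ branch at the endpoint, because the two branches of \eq{eq:DS-approx} do not coincide at $C=\tilde{C}$: the $C<\tilde{C}$ lower-bound formula instead tends to $\hat{w}^\top_{\tilde{C}}x_i^\prime+\delta$ and the corresponding upper-bound formula to $\hat{w}^\top_{\tilde{C}}x_i^\prime-\gamma$, so naively reading off the left branch would produce an inverted, hence vacuous, interval. The resolution I would give is that the regularization path $C\mapsto w^*_C$ is continuous — the objective in \eq{eq:the-class-of-problems} is $1$-strongly convex, so its minimizer is unique and varies continuously with $C$ — and therefore each branch's bounds remain valid in the limit $C\to\tilde{C}$. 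Taking the right-hand limit $C\downarrow\tilde{C}$ yields the proper non-degenerate bracket $[\hat{w}^\top_{\tilde{C}}x_i^\prime-\gamma,\ \hat{w}^\top_{\tilde{C}}x_i^\prime+\delta]$ around $w^{*\top}_{\tilde{C}}x_i^\prime$, which is the content of the corollary. As a sanity check, I would note the optimal case: if $\hat{w}_{\tilde{C}}=w^*_{\tilde{C}}$ then a subgradient with $g(\hat{w}_{\tilde{C}})=0$ exists, whence $\gamma=\delta=0$ and both bounds collapse to the exact score $w^{*\top}_{\tilde{C}}x_i^\prime$, consistent with the remark following the lemma.
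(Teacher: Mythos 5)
Your proof is correct and takes essentially the same route as the paper, which obtains the corollary by simply substituting $C=\tilde{C}$ into \eq{eq:DS-LB-approx} and \eq{eq:DS-UB-approx} and using the identity $\alpha(\hat{w}_{\tilde{C}},x^\prime_i)-\beta(\hat{w}_{\tilde{C}},x^\prime_i)=\hat{w}^\top_{\tilde{C}}x^\prime_i$. The continuity argument you add to license evaluating the $C>\tilde{C}$ branch at the endpoint is sound but not needed: in the appendix proof of Lemma~\ref{lemm:decision-score-bounds-with-approximate-solutions} the case analysis on $\bigl|1-C/\tilde{C}\bigr|$ is carried out with the first branch covering $C\ge\tilde{C}$, so the derivation yields the bound at $C=\tilde{C}$ directly (the radius term $\bigl|1-C/\tilde{C}\bigr|\,\|\hat{w}_{\tilde{C}}\|$ simply vanishes there), without any limiting step.
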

The results in Corollary
\ref{coro:DSB-property1}
are obtained by simply substituting $C = \tilde{C}$ into
\eq{eq:DS-LB-approx}
and
\eq{eq:DS-UB-approx}.
%

\subsection{Validation Error Bounds}
\label{subs:validation-error-bounds}
Given a lower and an upper bound of the score of each validation instance,
a lower bound of the validation error can be computed
by simply using the following facts:
\begin{subequations} 
 \label{eq:mis-classified}
\begin{align}
 \label{eq:mis-classified.a}
 \hspace*{-2.5mm}
 y^\prime_i = +1
 \text{ and }
 {UB}(w^{*\top}_C x^\prime_i | \hat{w}_{\tilde{C}}) < 0
 &\Rightarrow
 \text{mis-classified},
 \\
 \label{eq:mis-classified.b}
 \hspace*{-2.5mm}
 y^\prime_i = -1
 \text{ and }
 {LB}(w^{*\top}_C x^\prime_i | \hat{w}_{\tilde{C}}) > 0
 &\Rightarrow
 \text{mis-classified}.
\end{align}
\end{subequations}
Furthermore,
since the bounds in
Lemma
\ref{lemm:decision-score-bounds-with-approximate-solutions}
linearly change
with the regularization parameter
$C$,
we can identify
the interval of
$C$
within which
the validation instance is guaranteed to be
mis-classified.
\begin{lemm}
 \label{lemm:mis-classified-range}
 For a validation instance with
 $y^\prime_i = +1$,
 if
 \begin{align*}
  \tilde{C}
  <
  C
  <
  \frac{
  \beta(\hat{w}_{\tilde{C}}, x^\prime_i)
  }{
  \alpha(\hat{w}_{\tilde{C}}, x^\prime_i) + \delta(g(\hat{w}_{\tilde{C}}), x^\prime_i)
  } \tilde{C}
  ~~~
  {\rm or}
  ~~~
  \frac{
  \alpha(\hat{w}_{\tilde{C}}, x^\prime_i)
  }{
  \beta(\hat{w}_{\tilde{C}}, x^\prime_i) + \gamma(g(\hat{w}_{\tilde{C}}), x^\prime_i)
  } \tilde{C}
  <
  C
  <
  \tilde{C},
 \end{align*}
 then
 the validation instance
 $(x^\prime_i, y^\prime_i)$
 is mis-classified.
 Similarly,
 for a validation instance with
 $y^\prime_i = -1$,
 if
 \begin{align*}
  \tilde{C}
  <
  C
  <
  \frac{
  \alpha(\hat{w}_{\tilde{C}}, x^\prime_i)
  }{
  \beta(\hat{w}_{\tilde{C}}, x^\prime_i) + \gamma(g(\hat{w}_{\tilde{C}}), x^\prime_i)
  } \tilde{C}
  ~~~
  {\rm or}
  ~~~
  \frac{
  \beta(\hat{w}_{\tilde{C}}, x^\prime_i)
  }{
  \alpha(\hat{w}_{\tilde{C}}, x^\prime_i) + \delta(g(\hat{w}_{\tilde{C}}), x^\prime_i)
  } \tilde{C}
  <
  C
  <
  \tilde{C},
 \end{align*}
 then
 the validation instance
 $(x^\prime_i, y^\prime_i)$
 is mis-classified.
\end{lemm}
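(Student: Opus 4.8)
The plan is to substitute the explicit piecewise-linear expressions for the score bounds from Lemma~\ref{lemm:decision-score-bounds-with-approximate-solutions} into the misclassification conditions \eq{eq:mis-classified} and then solve the resulting linear inequalities in $C$, treating the two branches $C > \tilde{C}$ and $C < \tilde{C}$ separately. Since each bound in \eq{eq:DS-approx} is affine in $C$ on a given branch, each misclassification condition reduces to a single affine inequality whose solution set is a half-line, and intersecting that half-line with the branch's defining interval produces exactly one of the two stated intervals.

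First I would treat $y^\prime_i = +1$, for which \eq{eq:mis-classified.a} requires ${UB}(w^{*\top}_C x^\prime_i | \hat{w}_{\tilde{C}}) < 0$. On the branch $C > \tilde{C}$, the upper bound from \eq{eq:DS-UB-approx} is the affine function $-\beta + \tfrac{1}{\tilde{C}}(\alpha + \delta) C$ (suppressing the fixed arguments). Because $\tilde{C} > 0$ and the slope $\alpha + \delta \ge 0$, isolating $C$ preserves the inequality direction and gives $C < \tfrac{\beta}{\alpha + \delta}\tilde{C}$; intersecting with $C > \tilde{C}$ yields the first claimed interval. On the branch $C < \tilde{C}$, the upper bound is $\alpha - \tfrac{1}{\tilde{C}}(\beta + \gamma) C$, and the analogous rearrangement gives $C > \tfrac{\alpha}{\beta + \gamma}\tilde{C}$, which intersected with $C < \tilde{C}$ yields the second claimed interval.

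The case $y^\prime_i = -1$ is entirely symmetric: here \eq{eq:mis-classified.b} requires ${LB}(w^{*\top}_C x^\prime_i | \hat{w}_{\tilde{C}}) > 0$, so I would repeat the argument using the lower-bound expressions from \eq{eq:DS-LB-approx} on each branch. On $C > \tilde{C}$, solving $\alpha - \tfrac{1}{\tilde{C}}(\beta + \gamma) C > 0$ gives $C < \tfrac{\alpha}{\beta + \gamma}\tilde{C}$, and on $C < \tilde{C}$, solving $-\beta + \tfrac{1}{\tilde{C}}(\alpha + \delta) C > 0$ gives $C > \tfrac{\beta}{\alpha + \delta}\tilde{C}$, producing the two stated intervals.

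Since each branch reduces to an elementary rearrangement of one affine inequality, there is no real analytic difficulty; the work is essentially the substitution and bookkeeping. The main thing to get right is precisely that bookkeeping: matching each sign of $y^\prime_i$ to the correct bound (the upper bound for $+1$, the lower bound for $-1$), selecting the correct piecewise branch of that bound, and using the nonnegativity of $\alpha,\beta,\gamma,\delta$ together with $\tilde{C}>0$ to confirm that every division is sign-preserving so the endpoint ratios come out as written. One also reads each interval as being nonempty exactly when the relevant ratio crosses $1$ (e.g.\ $\beta > \alpha + \delta$ for the first $y^\prime_i = +1$ interval), which is the condition under which the branch actually contains a guaranteed-misclassification region.
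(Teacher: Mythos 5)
Your proof is correct and is exactly the argument the paper intends: the paper's own justification is the single sentence that the lemma ``can be easily shown by applying \eq{eq:DS-approx} to \eq{eq:mis-classified},'' and your branch-by-branch substitution and rearrangement of the affine bounds is precisely that computation, carried out with the right pairing of label sign to bound and the right piecewise branch in each case.
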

This lemma can be easily shown by
applying
\eq{eq:DS-approx}
to
\eq{eq:mis-classified}.

Using
Lemma~\ref{lemm:mis-classified-range},
the lower bound of the validation error is
represented
as a function of the regularization parameter
$C$
in the following form.
\begin{theo}
 \label{theo:valid-err-LB}
 Using an approximate solution
 $\hat{w}_{\tilde{C}}$
 for a regularization parameter
 $\tilde{C}$,
 the validation error
 $E_v(w^*_C)$
 for any
 $C > 0$
 satisfies
 \begin{align}
  \label{eq:valid-err-LB}
  &
  E_v(w^*_C) \ge
  {LB}(E_v(w^*_C) | \hat{w}_{\tilde{C}})
  :=
  \\
  \nonumber
  &
  \!\!\!\!\!
  \frac{1}{n^\prime}
  \Bigg( \!
  \sum_{y^\prime_i = +1} \!\!
  I \! \bigg( \!
  \tilde{C} \!\!
  <
  \!\!
  C \!\!
  < \!\!
  \frac{
  \beta(\hat{w}_{\tilde{C}}, x^\prime_i)
  }{
  \alpha(\hat{w}_{\tilde{C}}, x^\prime_i) \! + \! \delta(g(\hat{w}_{\tilde{C}}), x^\prime_i)
  } \tilde{C} \!
  \bigg)
  \! \! + \!\!
  \sum_{y^\prime_i = +1}
  \! \! I \!
  \bigg( \!
  \frac{
  \alpha(\hat{w}_{\tilde{C}}, x^\prime_i)
  }{
  \beta(\hat{w}_{\tilde{C}}, x^\prime_i) \! + \! \gamma(g(\hat{w}_{\tilde{C}}), x^\prime_i)
  } \tilde{C}
  \! \! < \! \!
  C
  \! \!
  <
   \!\!
  \tilde{C}
  \bigg)
  \\
  \nonumber
  &
  + \!\!
  \sum_{y^\prime_i = -1} \!\!
  I
  \bigg( \!
  \tilde{C}
  \!\!
  <
  \!\!
  C
  \!\! < \!\!
  \frac{
  \alpha(\hat{w}_{\tilde{C}}, x^\prime_i)
  }{
  \beta(\hat{w}_{\tilde{C}}, x^\prime_i) \! + \! \gamma(g(\hat{w}_{\tilde{C}}), x^\prime_i)
  } \tilde{C} \!
  \bigg)
  \! \! + \!\!
  \sum_{y^\prime_i = -1} \!\!
  I
  \bigg( \!
  \frac{
  \beta(\hat{w}_{\tilde{C}}, x^\prime_i)
  }{
  \alpha(\hat{w}_{\tilde{C}}, x^\prime_i) \! + \! \delta(g(\hat{w}_{\tilde{C}}), x^\prime_i)
  } \tilde{C}
  \!\! < \!\!
  C
  \!\!
  <
   \!\!
  \tilde{C} \!
  \bigg)
  \Bigg).
 \end{align}
\end{theo}
Theorem
\ref{theo:valid-err-LB}
is a direct consequence of
Lemma
\ref{lemm:mis-classified-range}.
The lower bound
\eq{eq:valid-err-LB}
is a staircase function of the regularization parameter
$C$.

By setting
$C = \tilde{C}$,
we can obtain a lower and an upper bound of the validation error
for the regularization parameter
$\tilde{C}$
itself,
which are used in the algorithm as a stopping criteria
for obtaining an approximate solution
$\hat{w}_{\tilde{C}}$.
%
\begin{coro}
 Given an approximate solution
 $\hat{w}_{\tilde{C}}$,
 the validation error
 $E_v(w^*_{\tilde{C}})$
 satisfies
 \begin{subequations}
  \label{eq:valid-err-tilde}
 \begin{align}
  \nonumber
  &
E_v(w^*_{\tilde{C}}) \ge
  {LB}(E_v(w^*_{\tilde{C}}) | \hat{w}_{\tilde{C}})
  \\
  &=
  \frac{1}{n^\prime}
  \Bigg(
  \sum_{y^\prime_i = +1}
  I \big(
  \hat{w}^{\top}_{\tilde{C}} x^\prime_i
  +
  \delta(g(\hat{w}_{\tilde{C}}), x^\prime_i)
  < 0
  \big)
  +
  \sum_{y^\prime_i = -1}
  I \big(
  \hat{w}^{\top}_{\tilde{C}} x^\prime_i
  -
  \gamma(g(\hat{w}_{\tilde{C}}), x^\prime_i)
  > 0
  \big)
  \Bigg),
  \\
  \nonumber
  &E_v(w^*_{\tilde{C}}) \le
  {UB}(E_v(w^*_{\tilde{C}}) | \hat{w}_{\tilde{C}})
  \\
  &=
  1 -
  \frac{1}{n^\prime}
  \Bigg(
  \sum_{y^\prime_i = +1}
  I \big(
  \hat{w}^{\top}_{\tilde{C}} x^\prime_i
  -
  \gamma(g(\hat{w}_{\tilde{C}}), x^\prime_i)
  \ge 0
  \big)
  \label{eq:valid-err-UB-tilde}
   +
  \sum_{y^\prime_i = -1}
  I \big(
  \hat{w}^{\top}_{\tilde{C}} x^\prime_i
  +
  \delta(g(\hat{w}_{\tilde{C}}), x^\prime_i)
  \le 0
  \big)
  \Bigg).
  \end{align}
 \end{subequations}

\end{coro}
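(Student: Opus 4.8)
The plan is to specialize the score bounds of Corollary~\ref{coro:DSB-property1} to $C = \tilde{C}$ and then count validation instances using the misclassification criteria in \eq{eq:mis-classified}. At $C = \tilde{C}$ the two bounds read ${LB}(w^{*\top}_{\tilde{C}} x^\prime_i | \hat{w}_{\tilde{C}}) = \hat{w}^{\top}_{\tilde{C}} x^\prime_i - \gamma(g(\hat{w}_{\tilde{C}}), x^\prime_i)$ and ${UB}(w^{*\top}_{\tilde{C}} x^\prime_i | \hat{w}_{\tilde{C}}) = \hat{w}^{\top}_{\tilde{C}} x^\prime_i + \delta(g(\hat{w}_{\tilde{C}}), x^\prime_i)$, so that for every $i \in [n^\prime]$ the true score $w^{*\top}_{\tilde{C}} x^\prime_i$ is trapped between these two quantities. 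Both inequalities below then follow by comparing, term by term, the indicator functions appearing in the claim against the defining indicator $I(y^\prime_i w^{*\top}_{\tilde{C}} x^\prime_i < 0)$ of \eq{eq:valid-error}.

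For the lower bound, I would argue that each indicator on the right-hand side certifies a genuine misclassification. If $y^\prime_i = +1$ and $\hat{w}^{\top}_{\tilde{C}} x^\prime_i + \delta(g(\hat{w}_{\tilde{C}}), x^\prime_i) < 0$, then $w^{*\top}_{\tilde{C}} x^\prime_i \le {UB}(w^{*\top}_{\tilde{C}} x^\prime_i | \hat{w}_{\tilde{C}}) < 0$, so by \eq{eq:mis-classified.a} the instance is misclassified; symmetrically, if $y^\prime_i = -1$ and $\hat{w}^{\top}_{\tilde{C}} x^\prime_i - \gamma(g(\hat{w}_{\tilde{C}}), x^\prime_i) > 0$, then $w^{*\top}_{\tilde{C}} x^\prime_i \ge {LB}(w^{*\top}_{\tilde{C}} x^\prime_i | \hat{w}_{\tilde{C}}) > 0$ and \eq{eq:mis-classified.b} applies. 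Hence each such indicator is dominated by $I(y^\prime_i w^{*\top}_{\tilde{C}} x^\prime_i < 0)$, and summing over $i \in [n^\prime]$ and dividing by $n^\prime$ shows the displayed quantity is no larger than $E_v(w^*_{\tilde{C}})$.

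For the upper bound, I would instead certify correct classifications and pass to the complement. Using the convention fixed in the footnote to \eq{eq:valid-error} that a zero score is counted as correct, an instance with $y^\prime_i = +1$ is correctly classified whenever $\hat{w}^{\top}_{\tilde{C}} x^\prime_i - \gamma(g(\hat{w}_{\tilde{C}}), x^\prime_i) \ge 0$, since then $w^{*\top}_{\tilde{C}} x^\prime_i \ge {LB}(w^{*\top}_{\tilde{C}} x^\prime_i | \hat{w}_{\tilde{C}}) \ge 0$; likewise an instance with $y^\prime_i = -1$ is correct whenever $\hat{w}^{\top}_{\tilde{C}} x^\prime_i + \delta(g(\hat{w}_{\tilde{C}}), x^\prime_i) \le 0$. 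Each such indicator is therefore dominated by $I(y^\prime_i w^{*\top}_{\tilde{C}} x^\prime_i \ge 0) = 1 - I(y^\prime_i w^{*\top}_{\tilde{C}} x^\prime_i < 0)$; summing over $i \in [n^\prime]$, dividing by $n^\prime$, and subtracting from $1$ yields exactly the claimed ${UB}(E_v(w^*_{\tilde{C}}) | \hat{w}_{\tilde{C}})$.

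The summation of indicator inequalities is routine bookkeeping. The only point requiring care is matching the strict versus non-strict inequalities in the two directions, so that the tie case $w^{*\top}_{\tilde{C}} x^\prime_i = 0$ is consistently assigned to the correctly-classified side in agreement with \eq{eq:valid-error}; this is precisely what forces the lower-bound indicators to use the strict forms ($<0$ and $>0$) while the upper-bound indicators use $\ge 0$ and $\le 0$.
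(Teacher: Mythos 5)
Your proof is correct and follows exactly the route the paper intends: substituting $C = \tilde{C}$ into the score bounds of Corollary~\ref{coro:DSB-property1} and then counting certified misclassifications (for the lower bound) and certified correct classifications with the zero-score convention (for the upper bound). The paper treats this corollary as immediate and gives no separate proof, so your argument supplies precisely the omitted details, including the careful handling of strict versus non-strict inequalities.
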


\begin{algorithm}[t]
 \caption{Computing the approximation level $\veps$ from the given set of solutions~~~~~~~~~~~~~~~~~~~~~~~~~~~~~~}
 \label{alg:compute.eps}
 \begin{algorithmic}[1]
   \REQUIRE
   $\{(x_i, y_i)\}_{i \in [n]}$,
   $\{(x^\prime_i, y^\prime_i)\}_{i \in [n^\prime]}$,
   $C_l$,
   $C_u$,
   $\cW := \{ w_{\tilde{C}_1} , \ldots, w_{\tilde{C}_T} \}$
   \STATE $E_v^{\rm best} \lA \min_{ \tilde{C}_{t} \in \{ \tilde{C}_1, \ldots, \tilde{C}_T \}  } {UB} (E_v(w^*_{\tilde{C}_t}) | {w}_{\tilde{C}_{t}} )  $
   \STATE
        $
          LB(E_v^*) \lA \min_{c \in [C_l, C_u]} \big\{
          \max_{ \tilde{C}_{t} \in \{ \tilde{C}_1, \ldots, \tilde{C}_T \} }  {LB} (E_v(w^*_c) | {w}_{\tilde{C}_t} )
          \big\}
        $

   \ENSURE
   $\veps = E_v^{\rm best}  - LB(E_v^*)$
 \end{algorithmic}
\end{algorithm}

--------------------------------------------------
algorithm
--------------------------------------------------
\begin{algorithm}[t]
 \caption{Finding an $\veps$ approximate regularization parameter with approximate solutions}
 \label{alg:find.apprx}
 \begin{algorithmic}[1]
   \REQUIRE
   $\{(x_i, y_i)\}_{i \in [n]}$,
   $\{(x^\prime_i, y^\prime_i)\}_{i \in [n^\prime]}$,
   $C_l$,
   $C_u$,
   $\veps$
   \STATE $t \lA 1$, $\tilde{C}_t \lA C_l$, $C^{\rm best} \lA C_l$, $E_v^{\rm best} \lA 1$
   \WHILE{$\tilde{C}_t \le C_u$}
   \STATE $\hat{w}_{\tilde{C}_t}$ $\lA$ solve \eq{eq:the-class-of-problems} approximately for $C = \tilde{C}_t$
   \STATE Compute ${UB}(E_v(w^*_{\tilde{C}_t}) | \hat{w}_{\tilde{C}_t})$ by \eq{eq:valid-err-UB-tilde}.
   \IF{${UB}(E_v(w^*_{\tilde{C}_t}) | \hat{w}_{\tilde{C}_t}) < E_v^{\rm best} $}
   \STATE $E_v^{\rm best} \lA {UB}(E_v(w^*_{\tilde{C}_t}) | \hat{w}_{\tilde{C}_t})$, $C^{\rm best} \lA \tilde{C}_t$
   \ENDIF
   \STATE Set $\tilde{C}_{t+1}$ by \eq{eq:nextC}
   \STATE $t \lA t+1$
   \ENDWHILE
   \ENSURE
  $C^{\rm best} \in \cC(\veps)$.
 \end{algorithmic}
\end{algorithm}

\section{Algorithm}
\label{sec:algorithm}
In this section
we present two algorithms
for each of the two problems
discussed in
\S\ref{sec:problem-setup}.
Due to the space limitation,
we roughly describe the most fundamental forms of these algorithms.
Details and several extensions of the algorithms are presented
in supplementary appendices~\ref{app:extensions} , \ref{app:path} and \ref{app:cv}.

\subsection{Problem 1: Computing the approximation level $\veps$ from a given set of solutions}
Given a set of
(either optimal or approximate)
solutions
$\hat{w}_{\tilde{C}_1}, \ldots, \hat{w}_{\tilde{C}_T}$,
obtained
e.g.,
by ordinary grid-search,
our first problem is to
provide a theoretical approximation level
$\veps$
in the sense of \eq{eq:apprx.quality}\footnote{
When we only have approximate solutions
$\hat{w}_{\tilde{C}_1}, \ldots, \hat{w}_{\tilde{C}_T}$,
Eq. \eq{eq:apprx.quality}
is slightly incorrect.
The first term of the l.h.s. of
\eq{eq:apprx.quality}
should be
$\min_{\tilde{C}_t \in \{\tilde{C}_1, \ldots, \tilde{C}_T\}} UB(E_v(\hat{w}_{\tilde{C}_t}) | \hat{w}_{\tilde{C}_t})$.
}.
This problem can be solved easily
by using the validation error lower bounds
developed in \S\ref{subs:validation-error-bounds}.
The algorithm is presented in
Algorithm~\ref{alg:compute.eps},
where
we compute the current best validation error
$E_v^{\rm best}$
in line 1,
and
a lower bound of the best possible validation error
$E_v^* := \min_{C \in [C_\ell, C_u]} E_v(w^*_C)$
in line 2.
Then,
the approximation level
$\veps$
can be simply obtained
by subtracting the latter from the former.
We note that
$LB(E_v^*)$,
the lower bound of $E_v^*$,
can be easily computed
by using $T$ valuation error lower bounds
${LB}(E_v(w_C^*) | w_{\tilde{C}_t}),$
$t = 1, \ldots, T$,
because they are represented as staircase functions of
$C$.

\subsection{Problem 2: Finding an $\veps$-approximate regularization parameter}
\label{subs:find.apprx}
Given a desired approximation level
$\veps$
such as
$\veps = 0.01$,
our second problem is to find an $\veps$-approximate regularization parameter.
To this end
we develop an algorithm
that produces a set of optimal or approximate soluitons
$\hat{w}_{\tilde{C_1}}, \ldots, \hat{w}_{\tilde{C_T}}$
such that,
if we apply Algorithm~\ref{alg:compute.eps} to this sequence,
then approximation level would be smaller than or equal to $\veps$.
%
Algorithm 2
is the pseudo-code of this algorithm.
It computes approximate solutions
for an increasing sequence of regularization parameters
in the main loop (lines 2-11).


Let us now
consider
$t^{\rm th}$
iteration in the main loop,
where
we have already computed
$t-1$
approximate solutions
$\hat{w}_{\tilde{C}_1}, \ldots, \hat{w}_{\tilde{C}_{t-1}}$
for
$\tilde{C}_1 < \ldots < \tilde{C}_{t-1}$.
At this point,
$$
C^{\rm best} := \arg \!\! \min_{ \tilde{C}_\tau \in \{\tilde{C}_1, \ldots, \tilde{C}_{t - 1}\}}  \!\!
 {UB}(E_v(w^*_{\tilde{C}_\tau}) | \hat{w}_{\tilde{C}_\tau}),
$$
is the best (in worst-case) regularization parameter obtained so far and it is guaranteed to be
an $\veps$-approximate regularization parameter
in the interval
$\![C_l, \! \tilde{C}_{t}]\!$
in the sense that the validation error,
$$
E_v^{\rm best} := \min_{\tilde{C}_\tau \in \{\tilde{C}_1, \ldots, \tilde{C}_{t - 1}\}} {UB}(E_v(w^*_{\tilde{C}_\tau}) | \hat{w}_{\tilde{C}_\tau}),
$$
is shown to be at most greater
by $\veps$
than the smallest possible validation error in the interval
$[C_l, \tilde{C}_{t}]$.
However,
we are not sure whether
$C^{\rm best}$
can still keep $\veps$-approximation property for
$C > \tilde{C}_t$.
Thus,
in line 3,
we approximately solve the optimization problem
\eq{eq:the-class-of-problems}
at $C = \tilde{C}_t$
and obtain an approximate solution
$\hat{w}_{\tilde{C}_t}$.
Note that
the approximate solution
$\hat{w}_{\tilde{C}_t}$
must be sufficiently good enough
in the sense that
$
{UB}(E_v(w^*_{\tilde{C}_\tau}) | \hat{w}_{\tilde{C}_\tau})
 -
{LB}(E_v(w^*_{\tilde{C}_\tau}) | \hat{w}_{\tilde{C}_\tau})
$
is sufficiently smaller than $\veps$
(typically 0.1$\veps$).
If the upper bound of the validation error
${UB}(E_v(w^*_{\tilde{C}_\tau}) | \hat{w}_{\tilde{C}_\tau})$
is smaller than $E_v^{\rm best}$,
we update
$E_v^{\rm best}$
and
$C^{\rm best}$
(lines 5-8).

Our next task is to find
$\tilde{C}_{t+1}$
in such a way that
$C^{\rm best}$
is an $\veps$-approximate regularization parameter in the interval
$[C_l, \tilde{C}_{t+1}]$.
%
%
Using the validation error lower bound
in Theorem~\ref{theo:valid-err-LB},
the task is to find
the smallest
$\tilde{C}_{t+1} > \tilde{C}_{t}$
that violates
\begin{align}
 \label{eq:condition-tp1}
 E_v^{\rm best}
 -
 {LB}(E_v(w^*_C) | \hat{w}_{\tilde{C}_t})
 \le \veps,
 ~~~
 \forall
 C \in [\tilde{C}_t, {C}_{u}],
\end{align}
In order to formulate such a $\tilde{C}_{t+1}$,
let us define
\begin{align*}
 \cP
 &
 := \{i \in [n^\prime] | y^\prime_i = +1, {UB}(w^{*\top}_{\tilde{C}_t} x^\prime_i | \hat{w}_{\tilde{C}_t}) < 0\},
 \cN
 := \{i \in [n^\prime] | y^\prime_i = -1, {LB}(w^{*\top}_{\tilde{C}_t} x^\prime_i | \hat{w}_{\tilde{C}_t}) > 0\}.
\end{align*}
Furthermore,
let
\begin{align*}
 \Gamma
 :=
 \Big\{
 \frac{
 \beta(\hat{w}_{\tilde{C}_t}, x^\prime_i)
 }{
 \alpha(\hat{w}_{\tilde{C}_t}, x^\prime_i) + \delta( g(\hat{w}_{\tilde{C}_t}), x^\prime_i)
 } \tilde{C}_t
 \Big\}_{i \in \cP}
 \cup
 \Big\{
 \frac{
 \alpha(\hat{w}_{\tilde{C}_t}, x^\prime_i)
 }{
 \beta(\hat{w}_{\tilde{C}_t}, x^\prime_i) + \gamma(g(\hat{w}_{\tilde{C}_t}), x^\prime_i)
 } \tilde{C}_t
 \Big\}_{i \in \cN},
\end{align*}
and
denote the $k^{\rm th}$-smallest element of
$\Gamma$
as
$k^{\rm th}(\Gamma)$
for any natural number $k$.
Then,
the smallest
$\tilde{C}_{t+1} > \tilde{C}_{t}$
that violates
\eq{eq:condition-tp1}
is given as
\begin{align}
 \label{eq:nextC}
 \hspace*{-2mm}
 \tilde{C}_{t+1}
 \! \lA \!
 (\lfloor
 n^\prime
 ({LB}( E_v(w^*_{\tilde{C}_t}) | \hat{w}_{\tilde{C}_t})
 \! - \!
 E_v^{\rm best}
 \!
 +
 \!
 \veps)
 \rfloor
 \!+\!
 1)^{\rm th}(\Gamma).
\end{align}
%


%
%
%
%
%
%
%
%
%
%
%
%
%
%
%
%
%


	\begin{figure*}[t]
  \begin{center}
   \begin{tabular}{ccc}
    \includegraphics[width=0.35\textwidth]{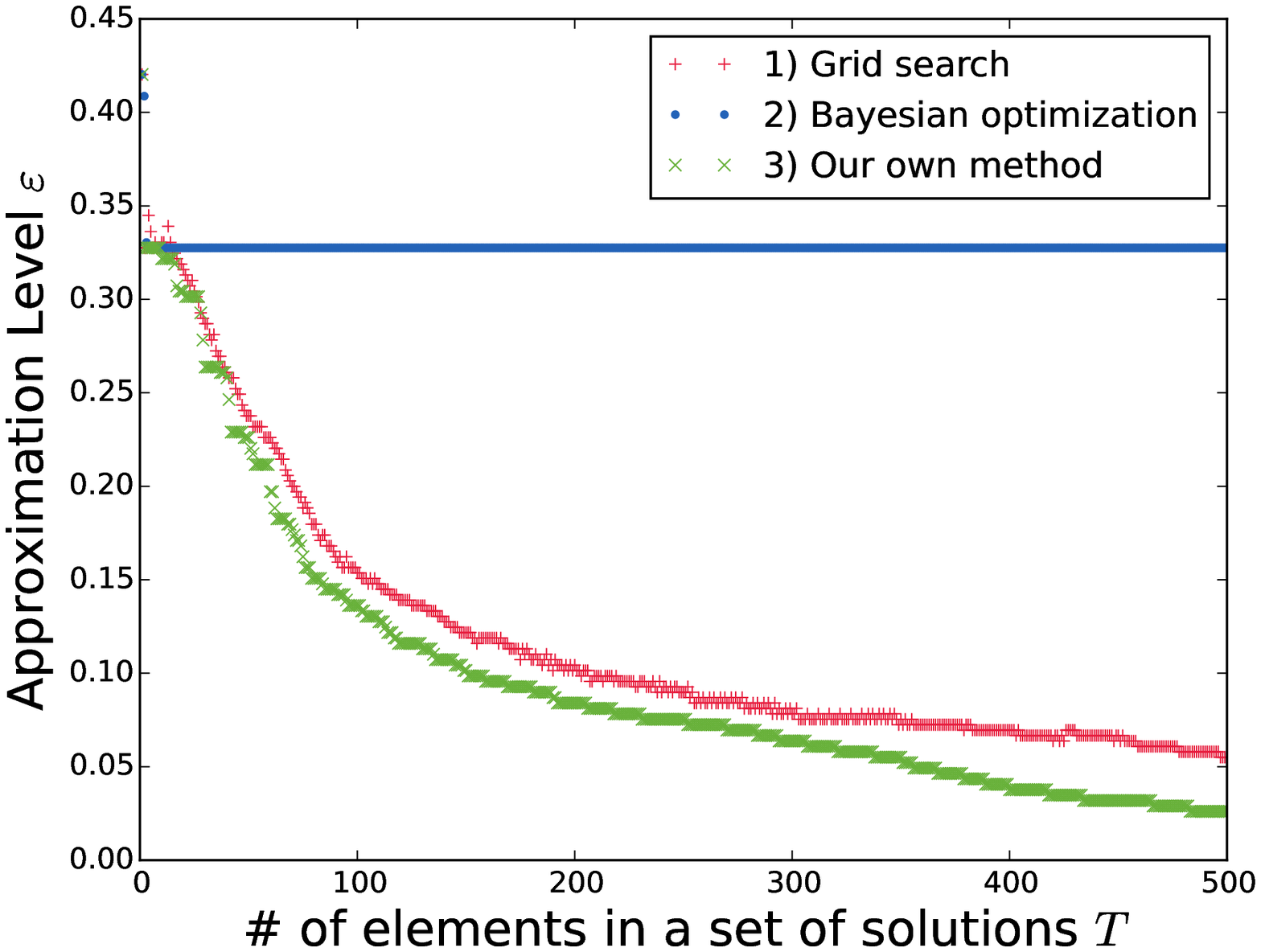}
    \hspace*{-5mm} & \hspace*{-5mm}
    \includegraphics[width=0.35\textwidth]{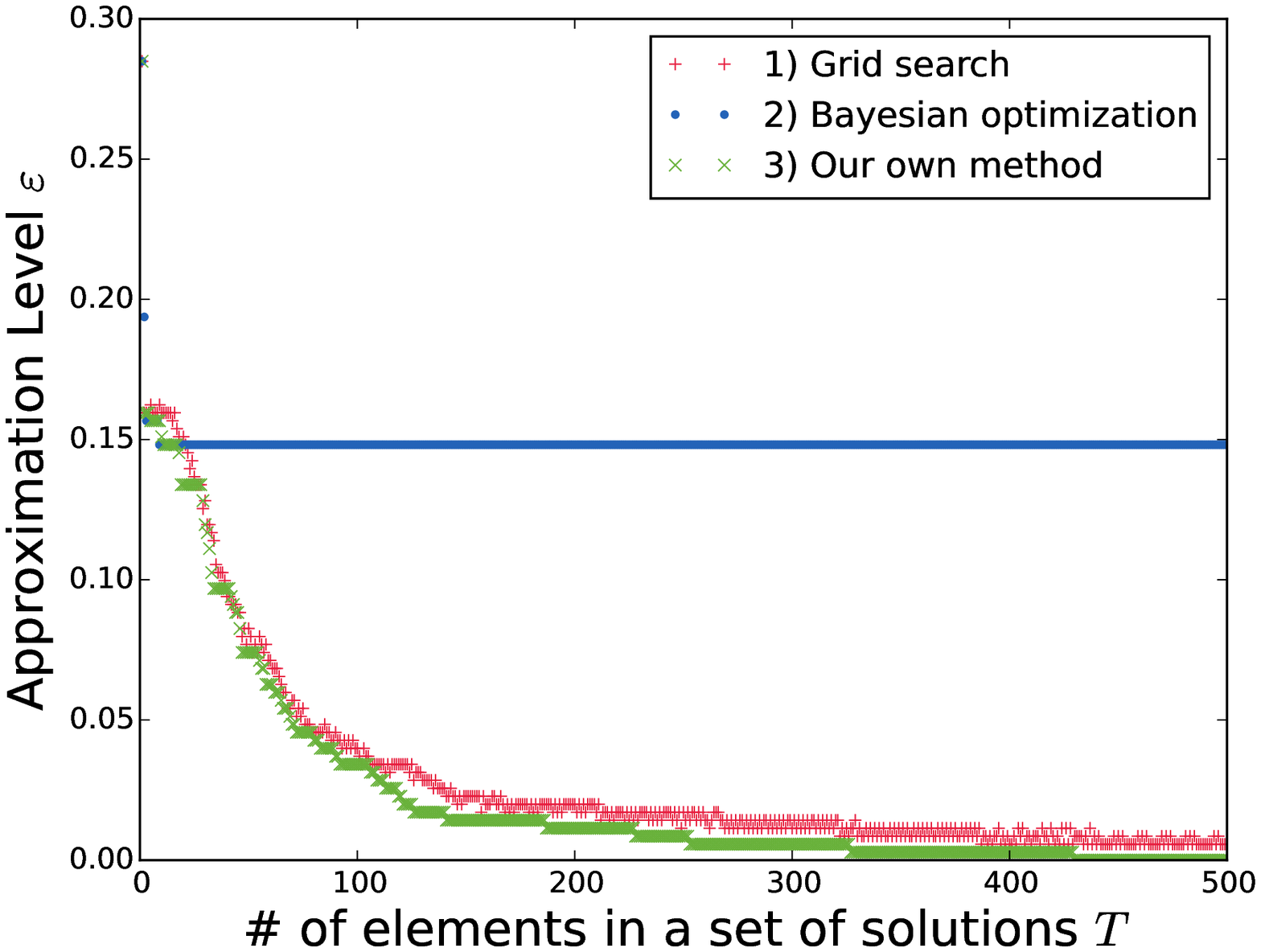}
    \hspace*{-5mm} & \hspace*{-5mm}
    \includegraphics[width=0.35\textwidth]{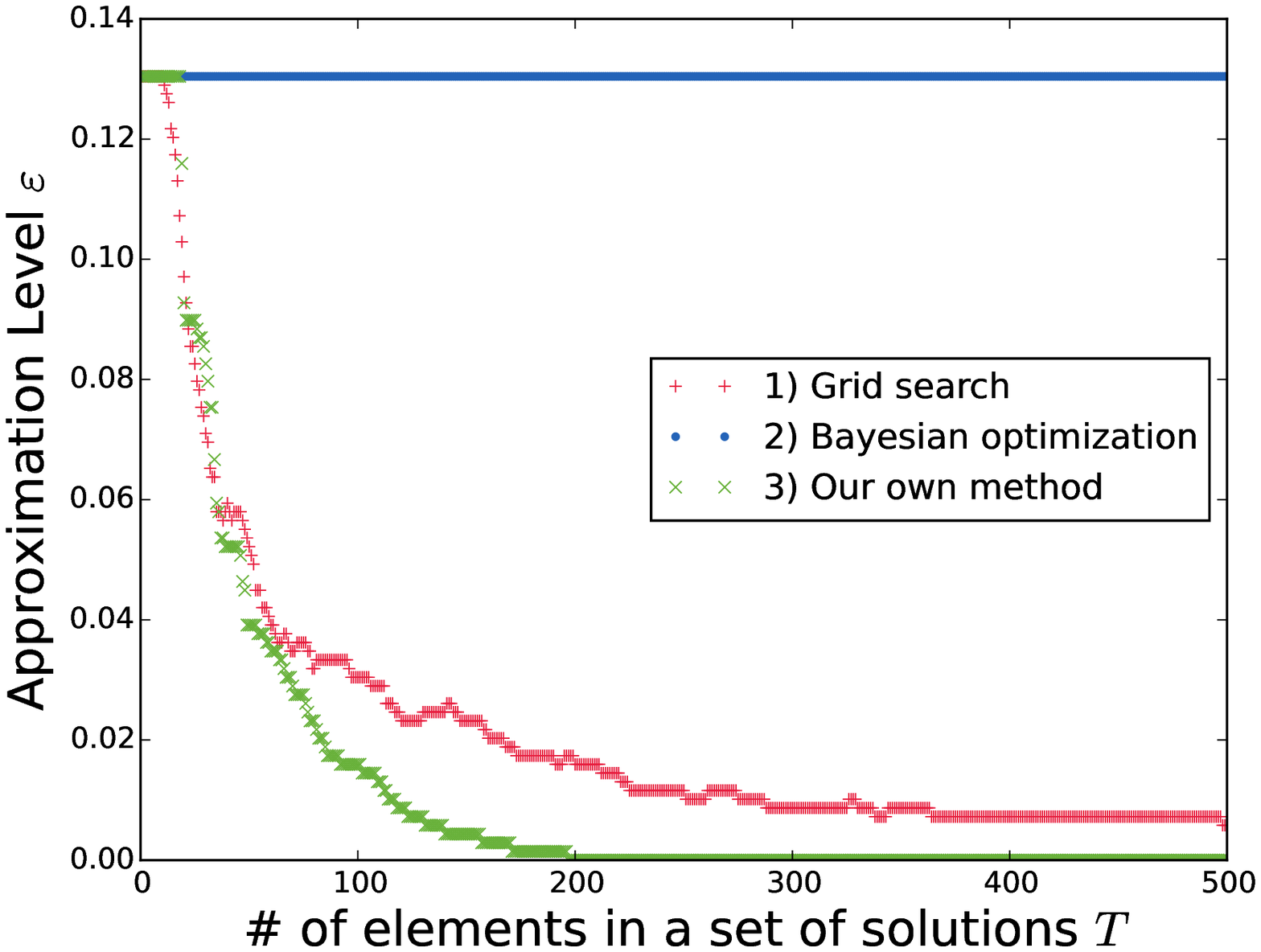} \\
    {\footnotesize ~~~{\tt liver-disorders } (D2) }
    &
    {\footnotesize {\tt ionosphere } (D3) }
    &
    {\footnotesize {\tt australian } (D4) }
   \end{tabular}
 \end{center}
 \caption{
  Illustrations of Algorithm~\ref{alg:compute.eps}
  on three benchmark datasets (D2, D3, D4).
  The plots indicate how the approximation level
  $\veps$
  improves
  as the number of solutions $T$
  increases
  in grid-search (red),
  Bayesian optimization (blue)
  and
  our own method (green, see the main text).
 }
 \label{fig:T_sec6}
	\end{figure*}

 \begin{figure*}[t]
  \begin{center}
   \begin{tabular}{ccc}
    \includegraphics[width=0.35\textwidth]{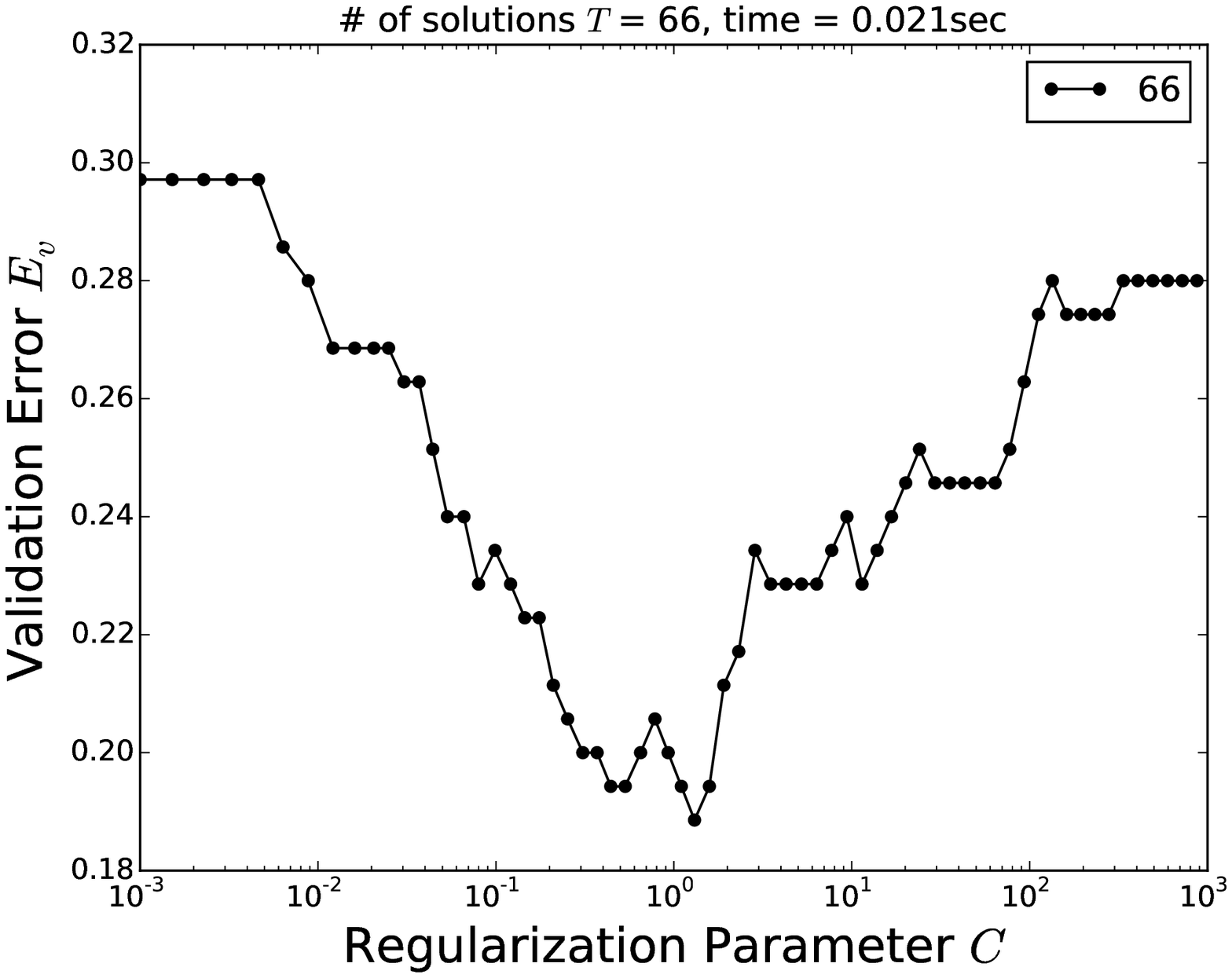}
    \hspace*{-5mm} & \hspace*{-5mm}
    \includegraphics[width=0.35\textwidth]{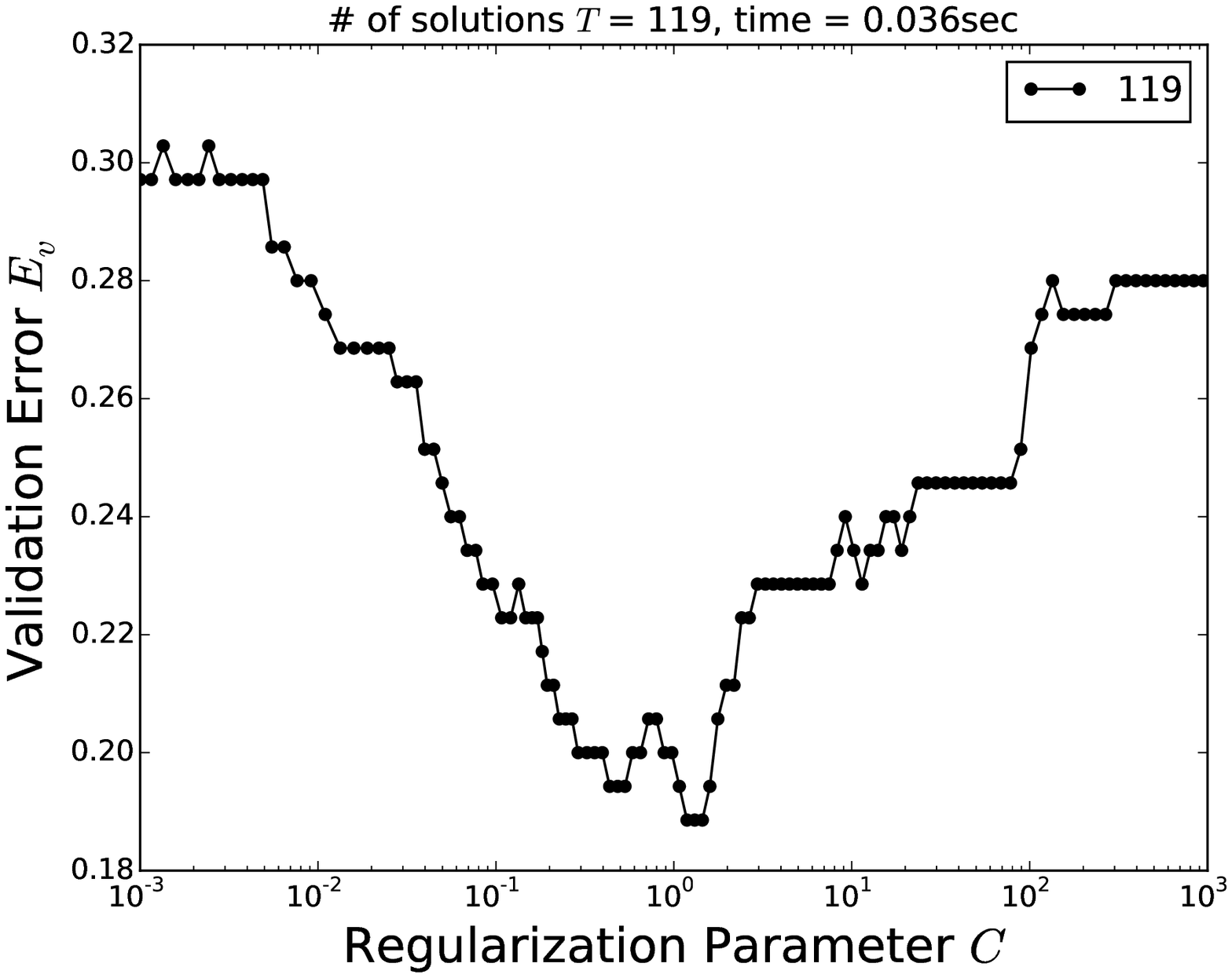}
    \hspace*{-5mm} & \hspace*{-5mm}
    \includegraphics[width=0.35\textwidth]{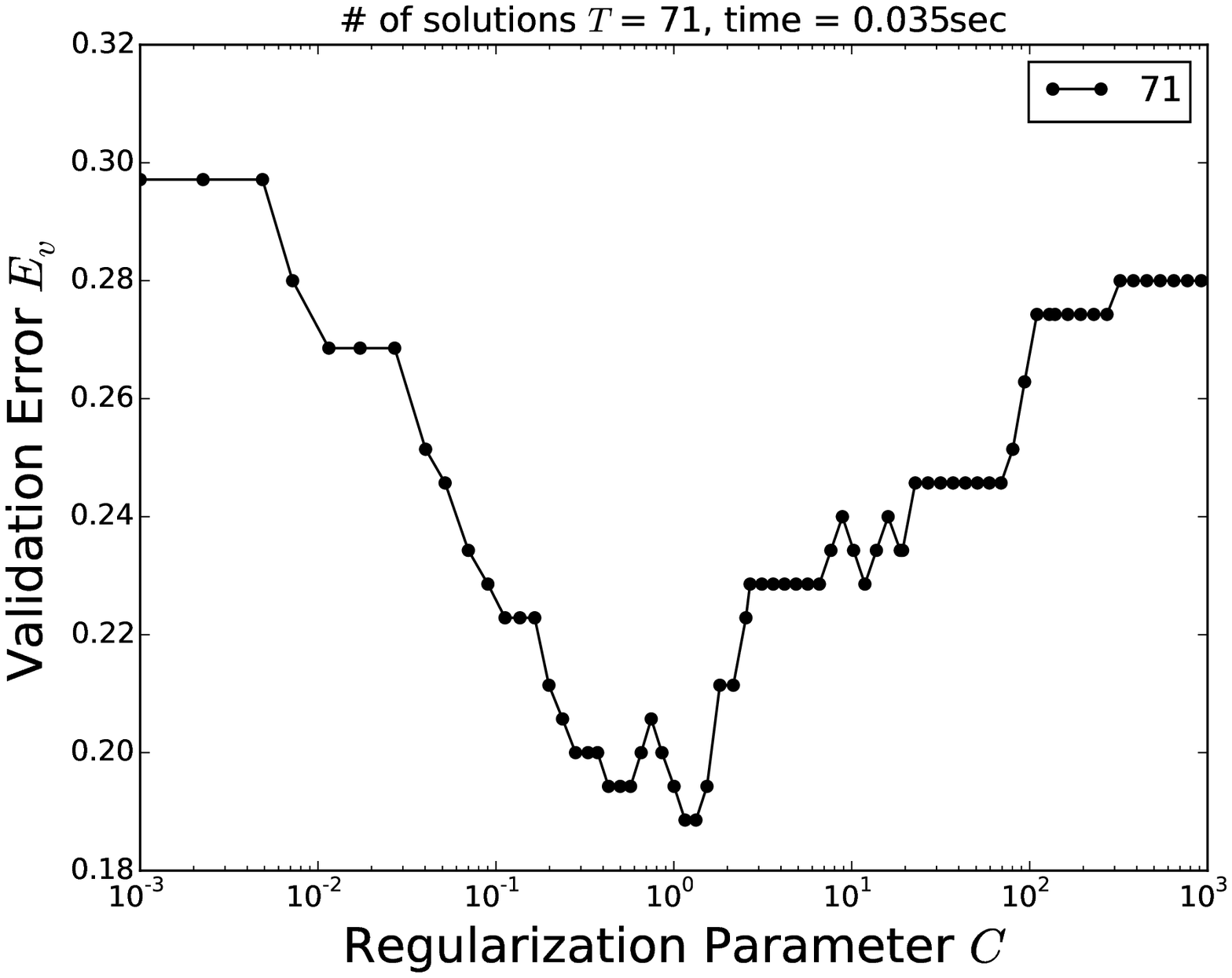} \\
    {\footnotesize (a) $\veps = 0.1$ without tricks}
    &
    {\footnotesize (b) $\veps = 0.05$ without tricks}
    &
    {\footnotesize (c) $\veps = 0.05$ with tricks 1 and 2}
   \end{tabular}
 \end{center}
 \caption{
  Illustrations of
  Algorithm 2
  on {\tt ionosphere} (D3) dataset
  for
 (a) {\bf op2} with $\veps=0.10$,
 (b) {\bf op2} with $\veps=0.05$
 and
 (c) {\bf op3} with $\veps=0.05$,
 respectively.
 \figurename~\ref{fig:illust}
 also shows the result for
{\bf op3} with $\veps=0.10$.
 %
 }
 \label{fig:ion_sec6}
\end{figure*}

 \section{Experiments}
\label{sec:exp}
In this section
we present experiments
for illustrating the proposed methods.
%
%
Table~\ref{tab:datasets}
summarizes the datasets
used in the experiments.
They are taken from
libsvm dataset repository
\cite{Chang2011a}.
All the input features except D9 and D10 were standardized to
$[-1, 1]$\footnote{
We use D9 and D10 as they are for exploiting sparsity.
}.
%
%
For illustrative results,
the instances were randomly divided into
a training
and
a validation sets
in roughly equal sizes.
For quantitative results,
we used 10-fold CV.
%
%
%
%
%
%
%
We used Huber hinge loss
(e.g., \cite{Chapelle2007})
which is convex
and
subdifferentiable with respect to the second argument.
The proposed methods are free from the choice of optimization solvers.
In the experiments,
we used an optimization solver
described in
\cite{Lin2008},
which is also implemented in
well-known
\emph{liblinear}
software
\cite{Fan2008}.
%
%
%
Our slightly modified code (for adaptation to Huber hinge loss)
is provided as a supplementary material,
and it will be put in public domain after the paper is accepted.
%
%
%
Whenever possible,
we used
\emph{warm-start} approach,
i.e.,
when we trained a new solution,
we used the closest solutions trained so far
(either approximate or optimal ones)
as the initial starting point of the optimizer.
All the computations were conducted by using a single core of an HP workstation
Z800 (Xeon(R) CPU X5675 (3.07GHz), 48GB MEM).
In all the experiments,
we set
$C_\ell = 10^{-3}$
and
$C_u = 10^{3}$.

\paragraph{Results on problem 1}
We applied Algorithm~\ref{alg:compute.eps} in \S\ref{sec:algorithm}
to
a set of solutions
obtained by
1) grid-search,
2) Bayesian optimization (with expected improvement acquisition function),
and
3) our own method that exploits information on the CV error lower bound available during the search process.
\figurename~\ref{fig:T_sec6} illustrates the results on three datasets,
where
we see how
the approximation level
$\veps$
in the vertical axis
changes
as the number of solutions ($T$ in our notation) increases.
The red plots indicate the results of grid-search.
As we increase the grid points,
the approximation level
$\veps$
was tended to be improved.
The blue plots indicate the results of Bayesian Optimization (BO).
Since BO tends to focus its search on a small region of the regularization parameter,
it was difficult to tightly bound the approximation level.
The green plots indicate the result of the third option,
where
we sequentially computed
a solution
whose validation error lower bound is smallest
based on the information obtained so far.
The results suggest that
this naive approach seems to offer slight improvement from grid-search.

%
%


\paragraph{Results on problem 2}
We applied Algorithm 2 to benchmark datasets
for demonstrating theoretically guaranteed choice of a regularization parameter
is possible with reasonable computational costs.
Besides the algorithm presented in \S\ref{sec:algorithm},
we also tested a variant described in supplementary Appendix~\ref{app:extensions}.
Specifically,
we have three algorithm options.
In the first option
({\bf op1}),
we used optimal solutions
$\{w^*_{\tilde{C}_t}\}_{t \in [T]}$
for computing CV error lower bounds.
In the second option
({\bf op2}),
we instead used approximate solutions
$\{\hat{w}_{\tilde{C}_t}\}_{t \in [T]}$.
In the last option
({\bf op3}),
we additionally used speed-up tricks
described in supplementary Appendix~\ref{app:extensions}.
We considered four different choices of
$\veps \in \{0.1, 0.05, 0.01, 0\}$.
Note that
$\veps = 0$
indicates
the task of finding the exactly optimal regularization parameter.
In some datasets,
the smallest validation errors are less than 0.1 or 0.05,
in which cases
we do not report the results
(indicated as ``$E_v < 0.05$'' etc.).
%
%
In trick1,
we initially computed solutions at
four different regularization parameter values
evenly allocated in
$[10^{-3}, 10^3]$
in the logarithmic scale.
In trick2,
the next regularization parameter
$\tilde{C}_{t+1}$
was set by replacing
$\veps$ in
\eq{eq:nextC}
with
$1.5\veps$
(see supplementary Appendix~\ref{app:extensions}).

For the purpose of illustration,
we plot examples of validation error curves
in several setups.
\figurename~\ref{fig:ion_sec6}
shows the validation error curves of
{\tt ionosphere} (D3) dataset
for several options and $\veps$.

%
Next,
we report the results on computational costs in CV setups.
Table~\ref{tab:results}
shows
the number of optimization problems we actually solved in the algorithm
(which is denoted as $T$),
and the total computation time in seconds.
The computational costs of the methods
mostly depend on
$T$.
As is evident from the algorithm description in
\S\ref{sec:algorithm},
$T$ gets smaller
as
$\veps$
increases.
Two tricks in
supplementary Appendix~\ref{app:extensions}
seem to be helpful
in most cases
for
reducing $T$.
In addition,
we see the advantage of using approximate solutions
by comparing the computation times of
{\bf op1}
and
{\bf op2},
although approximate solutions can be only used for
$\veps \neq 0$.
Overall,
the results suggest that
the proposed algorithm allows us to
find theoretically guaranteed approximate regularization parameters
with reasonable costs
except for $\veps = 0$ cases.
For example,
the algorithm found an
$\veps = 0.01$ approximate regularization parameter
within a minute
in 10-fold CV
for a dataset with more than 50000 instances
(see the results on D10 for $\veps = 0.01$ with {\bf op2} and {\bf op3} in Table~\ref{tab:results}).

\begin{table}[t]
   \vspace*{-3mm}
 \begin{center}
   \caption{Computational costs.
   For each of the three options
   and
   $\veps \in \{0.10, 0.05, 0.01, 0\}$,
   the number of optimization problems solved (denoted as $T$)
   and the total computational costs (denoted as time)
   are listed.
   Note that,
   for {\bf op2},
   there are no results for $\veps = 0$.
   }
   \label{tab:results}
   \begin{scriptsize}
    \begin{tabular}{r||l|r|r||r|r||r|r||l|r|r||r|r||r|r}
     & & \multicolumn{2}{|c||}{\bf op1} & \multicolumn{2}{|c||}{\bf op2} &  \multicolumn{2}{|c||}{\bf op3}  & & \multicolumn{2}{|c||}{\bf op1} & \multicolumn{2}{|c||}{\bf op2} &  \multicolumn{2}{|c}{\bf op3}  \\
     & & \multicolumn{2}{|c||}{(using $w^*_{\tilde{C}}$)} & \multicolumn{2}{|c||}{(using $\hat{w}_{\tilde{C}}$)} &  \multicolumn{2}{|c||}{(using tricks)} & & \multicolumn{2}{|c||}{(using $w^*_{\tilde{C}}$)} & \multicolumn{2}{|c||}{(using $\hat{w}_{\tilde{C}}$)} &  \multicolumn{2}{|c}{(using tricks)}  \\ \cline{3-15}
     \multirow{2}{*}{$\veps~~$}&  & \multirow{2}{*}{$T$} & time & \multirow{2}{*}{$T$} & time  & \multirow{2}{*}{$T$} & time & & \multirow{2}{*}{$T$} & ~time~ & \multirow{2}{*}{$T$} & ~time~ & \multirow{2}{*}{$T$} & ~time~  \\
     & & & (sec) & & (sec) & & (sec) & & & ~(sec)~ & & ~(sec)~ & & ~(sec)~ \\ \hline \hline
     0.10 & \multirow{4}{2pt}{$\!\! {\rm D1}$}&   30 & 0.068 &   32 & 0.031 &  33 & 0.041 & \multirow{4}{3pt}{$\!\! {\rm D6}$}   & 92 & 1.916 &   93 & 0.975 &  62 & 0.628   \\ \cline{1-1} \cline{3-8} \cline{10-15}
     0.05 & & 68 & 0.124 &   70 & 0.061 &  57 & 0.057 & &    207 & 4.099 &  209 & 2.065 & 123 & 1.136 \\ \cline{1-1} \cline{3-8} \cline{10-15}
     0.01 & & 234 & 0.428 &  324 & 0.194 & 205 & 0.157&  & 1042 & 16.31 & 1069 & 9.686 & 728 & 5.362 \\ \cline{1-1} \cline{3-8} \cline{10-15}
        0 & & 442 & 0.697 & \multicolumn{2}{|c||}{N.A.} & 383 & 0.629 & &   4276 & 57.57 & \multicolumn{2}{|c||}{N.A.}&2840 & 44.68 \\ \hline \hline

     0.10 & \multirow{4}{2pt}{$\!\! {\rm D2}$}&  221 & 0.177 &  223 & 0.124 & 131  & 0.084 & \multirow{4}{3pt}{$\!\! {\rm D7}$} &    289 & 8.492 &  293 & 5.278 & 167 & 3.319 \\ \cline{1-1} \cline{3-8} \cline{10-15}
     0.05 & & 534 & 0.385 &  540 & 0.290 & 367  & 0.218 &  &    601 & 16.18 &  605 & 9.806 & 379 & 6.604 \\ \cline{1-1} \cline{3-8} \cline{10-15}
     0.01 & & 1503 & 0.916 & 2183 & 0.825 & 1239 & 0.623 & &   2532 & 57.79 & 2788 & 35.21 &1735 & 24.04 \\ \cline{1-1} \cline{3-8} \cline{10-15}
        0 & & 10939 & 6.387 & \multicolumn{2}{|c||}{N.A.} & 6275 & 3.805 & & 67490  & 1135  & \multicolumn{2}{|c||}{N.A.}&42135& 760.8 \\ \hline \hline

    0.10 & \multirow{4}{2pt}{$\!\! {\rm D3}$} &    61 & 0.617 &   62 & 0.266 &  43 & 0.277 & \multirow{4}{3pt}{$\!\! {\rm D8}$} &     72 & 0.761 &   74 & 0.604 &  66 & 0.606 \\ \cline{1-1}  \cline{3-8} \cline{10-15}
    0.05 &  & 123  & 1.073 &  129 & 0.468 &  73 & 0.359 & &    192 & 1.687 &  195 & 1.162 & 110 & 0.926  \\ \cline{1-1}  \cline{3-8} \cline{10-15}
    0.01 &  & 600  & 4.776 &  778 & 0.716 & 270 & 0.940 &  &   1063 & 8.257 & 1065 & 6.238 & 614 & 4.043  \\ \cline{1-1}  \cline{3-8} \cline{10-15}
       0 &  &5412  & 26.39 & \multicolumn{2}{|c||}{N.A.}& 815 & 6.344 &  &  34920 & 218.4 & \multicolumn{2}{|c||}{N.A.} &15218& 99.57 \\ \hline \hline

    0.10 & \multirow{4}{2pt}{$\!\! {\rm D4}$} &     27 & 0.169 &   27 & 0.088 &  23 & 0.093 & \multirow{4}{3pt}{$\!\! {\rm D9}$} &    134 & 360.2 &  136 & 201.0 &  89 & 74.37 \\ \cline{1-1}  \cline{3-8} \cline{10-15}
    0.05 &  &    64 & 0.342 &   65 & 0.173 &  47 & 0.153 & &    317 & 569.9 &  323 & 280.7 & 200 & 128.5 \\ \cline{1-1}  \cline{3-8} \cline{10-15}
    0.01 &  &   167 & 0.786 &  181 & 0.418 & 156 & 0.399 & &   1791 & 2901  & 1822 & 1345  &1164 & 657.4 \\ \cline{1-1}  \cline{3-8} \cline{10-15}
       0 &  &   342 & 1.317 & \multicolumn{2}{|c||}{N.A.} & 345 & 1.205 & &  85427 &106937 & \multicolumn{2}{|c||}{N.A.} & 63300& 98631 \\ \hline \hline

    0.10 & \multirow{4}{2pt}{$\!\! {\rm D5}$} &     62 & 0.236 &   63 & 0.108 &  45 & 0.091 & \multirow{4}{3pt}{$\!\!\!\! {\rm D10}$} & \multicolumn{2}{|c||}{$E_v<0.10$} & \multicolumn{2}{|c||}{$E_v<0.10$} & \multicolumn{2}{|c}{$E_v<0.10$} \\ \cline{1-1}  \cline{3-8} \cline{10-15}
    0.05 &  &   108 & 0.417 &  109 & 0.171 &  77 & 0.137 & & \multicolumn{2}{|c||}{$E_v<0.05$} & \multicolumn{2}{|c||}{$E_v<0.05$} & \multicolumn{2}{|c}{$E_v<0.05$}  \\ \cline{1-1}  \cline{3-8} \cline{10-15}
    0.01 &  &   421 & 1.201 &  440 & 0.631 & 258 & 0.401 & &    157 & 81.75 &  162 & 31.02 &  114  & 36.81 \\ \cline{1-1}  \cline{3-8} \cline{10-15}
       0 &  &  2330 & 4.540 & \multicolumn{2}{|c||}{N.A.} & 968 & 2.451 &  & 258552 & 85610 & \multicolumn{2}{|c||}{N.A.} & 42040 & 23316 

    \end{tabular}
   \end{scriptsize}
 \end{center}
 \vspace*{-5mm}
\end{table}

\begin{table}[h]
 \vspace*{-5mm}
 \begin{center}
  \caption{Benchmark datasets used in the experiments.}
  \label{tab:datasets}
  \vspace*{1mm}
  \begin{scriptsize}
  \begin{tabular}{l||c|r|r||l||c|r|r}
 &dataset name&~~sample size~~&input dimension & &dataset name&~~sample size~~&input dimension\\\hline\hline
   D1&{\tt heart}&270~~~~~~&13~~~~~~~~~& D6&{\tt german.numer}&1000~~~~~~&24~~~~~~~~~\\ \hline
   D2&{\tt liver-disorders}&345~~~~~~&6~~~~~~~~~&D7&{\tt svmguide3}&1284~~~~~~&21~~~~~~~~~\\ \hline
   D3&{\tt ionosphere}&351~~~~~~&34~~~~~~~~~&D8&{\tt svmguide1}&7089~~~~~~&4~~~~~~~~~\\ \hline
   D4&{\tt australian}&690~~~~~~&14~~~~~~~~~&D9&{\tt a1a}&32561~~~~~~&123~~~~~~~~~\\ \hline
   D5&{\tt diabetes}&768~~~~~~&8~~~~~~~~~&D10&{\tt w8a}&64700~~~~~~&300~~~~~~~~~\\ 
  \end{tabular}
  \end{scriptsize}
 \end{center}
 \vspace*{-5mm}
\end{table}

\section{Conclusions and future works}
\label{sec:conc}
We presented a novel algorithmic framework
for computing CV error lower bounds
as a function of the regularization parameter.
The proposed framework can be used for
a theoretically guaranteed choice of a regularization parameter.
Additional advantage of this framework is that
we only need to compute
a set of sufficiently good approximate solutions
for obtaining such a theoretical guarantee,
which is computationally advantageous.
As demonstrated in the experiments,
our algorithm is practical
in the sense that
the computational cost is reasonable
as long as the approximation quality
$\veps$
is not too close to 0.
An important future work is to extend the approach
to multiple hyper-parameters tuning setups.


\appendix
\section{Proof of Lemma~\ref{lemm:decision-score-bounds-with-approximate-solutions}}
\label{app:proofs}
In this section we prove Lemma~\ref{lemm:decision-score-bounds-with-approximate-solutions}.
First
we present
two propositions
which are used of proving
Lemma
\ref{lemm:decision-score-bounds-with-approximate-solutions}.

\begin{prop}
 \label{prop:VI}
 Consider the following general problem:
 \begin{align}
  \label{eq:general.convex.constrained.problem}
  \min_z ~ \phi(z) ~~~ {\rm s.t.} ~ z \in \cZ,
 \end{align}
 where
 $\phi: \cZ \to \RR$
 is a subdifferentiable convex function and $\cZ \subset \RR^d$ is a convex set.
 Then a solution $z^*$ is the optimal solution of
 \eq{eq:general.convex.constrained.problem}
 if and only if there exists a subgradient $\xi \in \partial \phi(z^*)$
 such that
 \begin{align*}
  \xi ^\top (z^* - z) \le 0, ~~~ \forall ~ z \in \cZ,
 \end{align*}
 where $\partial \phi(z^*)$ is the set of all subgradients of convex function $\phi$ at $z = z^*$.
\end{prop}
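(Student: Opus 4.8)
The plan is to prove the two implications of the ``if and only if'' separately, reading the stated condition as the first-order optimality characterization for minimizing a convex function over a convex set. The sufficiency direction follows immediately from the subgradient inequality; the necessity direction is the substantive part, and the main obstacle there is producing a \emph{single} subgradient $\xi$ that simultaneously satisfies the inequality for every feasible $z$, rather than a different subgradient for each $z$.

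For sufficiency, suppose $\xi \in \partial\phi(z^*)$ satisfies $\xi^\top(z^* - z) \le 0$ for all $z \in \cZ$. By the defining subgradient inequality of a convex function, $\phi(z) \ge \phi(z^*) + \xi^\top(z - z^*)$ for every $z$. Since $\xi^\top(z - z^*) = -\xi^\top(z^* - z) \ge 0$ on $\cZ$, this yields $\phi(z) \ge \phi(z^*)$ for all $z \in \cZ$, so $z^*$ is optimal.

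For necessity, I would assume $z^*$ is optimal and first extract a directional statement. Fix any $z \in \cZ$; since $\cZ$ is convex, $z^* + t(z - z^*) \in \cZ$ for $t \in [0,1]$, and optimality gives $\phi(z^* + t(z-z^*)) \ge \phi(z^*)$. Dividing by $t > 0$ and letting $t \downarrow 0$ shows the one-sided directional derivative satisfies $\phi'(z^*; z - z^*) \ge 0$. I would then invoke the standard identity for finite convex functions, $\phi'(z^*; d) = \max_{\xi \in \partial\phi(z^*)} \xi^\top d$ (the directional derivative equals the support function of the subdifferential), to conclude $\max_{\xi \in \partial\phi(z^*)} \xi^\top(z - z^*) \ge 0$ for every $z \in \cZ$.

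This only provides, for each $z$, \emph{some} subgradient depending on $z$; the remaining and hardest step is to exchange the order of the maximum and the minimum so as to obtain a uniform selection. I would set up the bilinear payoff $F(\xi,z) := \xi^\top(z-z^*)$ on $\partial\phi(z^*) \times \cZ$, so that the inequality just derived reads $\min_{z \in \cZ}\max_{\xi \in \partial\phi(z^*)} F(\xi,z) \ge 0$. Because $\phi$ is finite-valued convex, $\partial\phi(z^*)$ is nonempty, convex and compact, while $\cZ$ is convex and $F$ is linear (hence concave and upper semicontinuous in $\xi$, convex and lower semicontinuous in $z$); Sion's minimax theorem then permits swapping the two operators, giving $\max_{\xi}\min_{z} F(\xi,z) \ge 0$, with the outer maximum attained at some $\xi^\star \in \partial\phi(z^*)$ by compactness. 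This $\xi^\star$ satisfies $\xi^{\star\top}(z-z^*) \ge 0$, equivalently $\xi^{\star\top}(z^*-z) \le 0$, for all $z \in \cZ$, which is the asserted condition. Equivalently, one may phrase this selection step as $0 \in \partial\phi(z^*) + N_{\cZ}(z^*)$ with $N_{\cZ}(z^*)$ the normal cone, derived via the Moreau--Rockafellar sum rule; either way, the crux is the simultaneous (uniform) choice of subgradient.
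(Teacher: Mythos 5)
Your argument is correct, but note that the paper does not actually prove Proposition~\ref{prop:VI}: it simply defers to Proposition B.24 of \cite{Bertsekas1999}. You therefore supply a self-contained proof where the paper has none, and your route is the standard one: the sufficiency direction is the one-line subgradient inequality, and the necessity direction goes through the nonnegativity of the one-sided directional derivative $\phi'(z^*; z-z^*)$, the max-formula $\phi'(z^*;d)=\max_{\xi\in\partial\phi(z^*)}\xi^\top d$, and a minimax (or, equivalently, normal-cone/sum-rule) argument to turn the $z$-dependent choice of subgradient into a single uniform $\xi$. You correctly identify that this uniform selection is the only nontrivial step. The one caveat worth flagging is a regularity issue your necessity argument silently relies on: the max-formula and the nonemptiness/compactness of $\partial\phi(z^*)$ (needed both to attain the max in the directional-derivative identity and to apply Sion's theorem and attain the outer supremum) are guaranteed when $\phi$ is finite and convex on a neighborhood of $z^*$ (e.g., $z^*$ in the interior of $\mathrm{dom}\,\phi$), whereas the proposition as stated only assumes $\phi:\cZ\to\RR$ is ``subdifferentiable''; if $z^*$ sits on the boundary of the effective domain the subdifferential can be unbounded and the max-formula can fail, and the Moreau--Rockafellar alternative likewise needs a constraint qualification such as $\mathrm{ri}(\mathrm{dom}\,\phi)\cap\mathrm{ri}(\cZ)\neq\emptyset$. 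In the paper's application $\phi$ is the finite-valued regularized empirical risk on all of $\RR^d$, so these hypotheses hold and your proof goes through; it would be worth stating that assumption explicitly.
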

See,
for example,
Proposition B.24 in \cite{Bertsekas1999}
for the proof of
Proposition~\ref{prop:VI}.

\begin{prop}
 \label{prop:ball}
 Let
 $p, q \in \RR^d$
 be arbitrary $d$-dimensional vectors
 and
 $r > 0$
 be an arbitrary positive constant.
 Then,
 the solutions of the following optimization problem
 can be explicitly obtained as follows:
 \begin{align}
  \label{eq:ball-L}
  p^\top q - \|p\| r
  &=
  \min_{z \in \RR^d}~p^\top z
  ~~~
  {\rm s.t.}
  ~
  \|z - q\|^2 \le r^2 ,
  \\
  \label{eq:ball-U}
  p^\top q + \|p\| r
  &=
  \max_{z \in \RR^d}~p^\top z
  ~~~
  {\rm s.t.}
  ~
  \|z - q\|^2 \le r^2 .
 \end{align}
\end{prop}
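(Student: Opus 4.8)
The plan is to reduce both optimization problems to a single application of the Cauchy--Schwarz inequality. First I would perform the change of variables $z = q + u$, which is a bijection of $\RR^d$ onto itself and maps the feasible set $\{z : \|z-q\|^2 \le r^2\}$ onto the centered ball $\{u : \|u\| \le r\}$. Under this substitution the objective becomes $p^\top z = p^\top q + p^\top u$, so the constant $p^\top q$ factors out and the two problems in \eq{eq:ball-L} and \eq{eq:ball-U} reduce to computing $\min_{\|u\| \le r} p^\top u$ and $\max_{\|u\| \le r} p^\top u$, respectively.

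Next I would bound $p^\top u$ over the ball. By the Cauchy--Schwarz inequality, $|p^\top u| \le \|p\| \, \|u\| \le \|p\| \, r$ for every feasible $u$, which immediately yields $p^\top u \ge -\|p\| r$ and $p^\top u \le \|p\| r$, giving the two inequalities $\ge$ and $\le$ in the claimed identities after adding back $p^\top q$. It then remains to show these bounds are attained. Assuming $p \neq 0$, I would exhibit the explicit optimizers $u = -r\,p/\|p\|$ for the minimization and $u = r\,p/\|p\|$ for the maximization; each lies on the boundary of the ball (so it is feasible), and a direct substitution gives $p^\top u = \mp \|p\| r$, matching the respective bound. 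This is precisely the equality case of Cauchy--Schwarz, where $u$ is aligned (anti-aligned) with $p$.

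Finally I would dispose of the degenerate case $p = 0$ separately: there the objective $p^\top z$ is identically $0$ over the whole ball, while the right-hand sides $p^\top q \pm \|p\| r$ both equal $0$ as well, so the identities hold trivially. Since $r > 0$ guarantees the ball is nonempty (indeed has nonempty interior), no existence issue arises. I do not anticipate a genuine obstacle here; the only points requiring care are handling $p = 0$ so that the normalization $p/\|p\|$ is well defined, and noting that the optimizers sit on the boundary rather than the interior, which is consistent with maximizing a nonconstant linear functional over a compact convex set.
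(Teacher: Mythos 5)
Your proof is correct, but it takes a genuinely different route from the paper. The paper proves Proposition~\ref{prop:ball} by Lagrangian duality: it introduces a multiplier $\lambda > 0$, reduces the problem to maximizing the dual function $H(\lambda) = -\lambda r^2 - \|p\|^2/(4\lambda) + p^\top q$, solves $\partial H/\partial \lambda = 0$ to get $\lambda^* = \|p\|/(2r)$, and substitutes back. You instead translate to the centered ball and apply Cauchy--Schwarz, exhibiting the explicit optimizers $u = \pm r\,p/\|p\|$. Your argument is more elementary and in one respect more complete: by producing feasible points that attain the bounds you prove equality directly, whereas the paper's derivation implicitly relies on strong duality and on the claim that the constraint is ``strictly active'' at the optimum --- a claim that fails when $p = 0$, a case you handle explicitly and the paper silently ignores. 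What the Lagrangian route buys in exchange is a template that extends to more general convex constraint sets and connects to the safe-screening literature the authors cite, where bounding linear functionals over dual-feasible regions via multipliers is the standard move; for a Euclidean ball, however, your Cauchy--Schwarz argument is the shorter and cleaner proof.
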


\begin{proof}[Proof of Proposition \ref{prop:ball}]
 Using a Lagrange multiplier
 $\lambda > 0$,
 the problem
 \eq{eq:ball-L}
 is rewritten as
 \begin{align*}
  &
  \min_{z \in \RR^d}~p^\top z
  ~~~
  {\rm s.t.}
  ~
  \|z - q\|^2 \le r^2
  \\
  =
  &
  \min_{z \in \RR^d}
  \max_{\lambda > 0}
  \big(
  p^\top z + \lambda (\|z - q\|^2 \le r^2)
  \big)
  \\
  =
  &
  \max_{\lambda > 0}
  \big(
  - \lambda r^2
  +
  \min_{z}
  \big(
  \lambda \|z - p\|^2 + p^\top z
  \big)
  \big)
  \\
  =
  &
  \max_{\lambda > 0}
  ~
  H(\lambda) :=
  \big(
  - \lambda r^2
  - \frac{\|p\|^2}{4 \lambda}
  +
  p^\top q
  \big),
 \end{align*}
 where
 $\lambda$
 is strictly positive
 because the constraint
 $\|p - q\|^2 \le r^2$
 is strictly active
 at the optimal solution.
 By letting
 $\partial H(\lambda)/\partial \lambda = 0$,
 the optimal
 $\lambda$
 is written as
 \begin{align*}
  \lambda^* := \frac{\|p\|}{2 r}
  =
  \arg \max_{\lambda > 0} ~ H(\lambda).
 \end{align*}
 Substituting
 $\lambda^*$
 into
 $H(\lambda)$,
 \begin{align*}
  p^\top q - \|p\| r = \max_{\lambda > 0}~H(\lambda).
 \end{align*}
 The upper bound of
 $p^\top z$
 in
 \eq{eq:ball-U}
 can be shown similarly.
\end{proof}

\begin{proof}[Proof of Lemma~\ref{lemm:decision-score-bounds-with-approximate-solutions}]
 From
 Proposition~\ref{prop:VI},
 the optimal solution
 $w^*_C$
 satisfies
 \begin{align}
\label{eq:prf-b}
  \left( w^*_C + C \sum_{i \in [n]} \xi_i(w^*_C) \right)^\top (w^*_C - \hat{w}_{\tilde{C}}) \le 0,
 \end{align}
 where
$\xi_i(w^*_C) $ is a subgradient of $\ell_i$ at $w = w^*_C$ for any $i \in [n]$ .

 Since from
 $\ell_i$
 is convex
 for any
 $i \in [n]$ and the definition of a subgradient,
 we have the following two inequalities:
 \begin{align*}
  &
  \ell_i(w^*_C)
  \ge
  \ell_i(\hat{w}_{\tilde{C}})
  +
  \xi_i(\hat{w}_{\tilde{C}})
  ^\top(w^{*}_C - \hat{w}_{\tilde{C}}).
  \\
  &
  \ell_i(\hat{w}_{\tilde{C}})
  \ge
  \ell_i(w^*_C)
  +
  \xi_i(w^*_C)^\top
  (\hat{w}_{\tilde{C}} - w^{*}_C).
 \end{align*}
 Combining these two inequalities,
 we have
 \begin{align}
  \label{eq:prf-a}
  \xi_i(w^*_C) ^\top(w^{*}_C - \hat{w}_{\tilde{C}})
  \ge
  \xi_i(\hat{w}_{\tilde{C}}) ^\top(w^{*}_C - \hat{w}_{\tilde{C}}).
 \end{align}
 Substituting
 \eq{eq:prf-a}
 into
 \eq{eq:prf-b},
 \begin{align}
  \label{eq:prf-d}
  w^{*\top}_C (w^*_C - \hat{w}_{\tilde{C}})
  +
  C \sum_{i \in [n]}
  \xi_i(\hat{w}_{\tilde{C}}) ^\top(w^{*}_C - \hat{w}_{\tilde{C}})
  \le 0.
 \end{align}
 From
 \eq{eq:gradient},
 \begin{align}
  \label{eq:prf-c}
  \sum_{i \in [n]}
  \xi_i(\hat{w}_{\tilde{C}})
  =
  \frac{1}{\tilde{C}}
  \Big(
  g(\hat{w}_{\tilde{C}})
  -
  \hat{w}_{\tilde{C}}
  \Big) .
 \end{align}
 Substituting
 \eq{eq:prf-c}
 into
 \eq{eq:prf-d},
 \begin{align*}
  &
  w^{*\top}_C (w^*_C - \hat{w}_{\tilde{C}})
  +
  \frac{C}{\tilde{C}}
  \Big(
  g(\hat{w}_{\tilde{C}})
  -
  \hat{w}_{\tilde{C}}
  \Big)
  ^\top(w^{*}_C - \hat{w}_{\tilde{C}})
  \le 0
  \\
  \Leftrightarrow~
  &
  \Big\|
  w^*_C -
  \frac{1}{2}
  \Big(
  \hat{w}
  -
  \frac{C}{\tilde{C}}
  (g(\hat{w}) - \hat{w})
  \Big)
  \Big\|^2
  \le
  \Big(
  \frac{1}{2}
  \Big\|
  \hat{w} + \frac{C}{\tilde{C}}(g(\hat{w}) - \hat{w})
  \Big\|
  \Big)^2.
 \end{align*}
 The lower bound
 ${LB}(w^{*\top}_C x^\prime_i | \hat{w}_{\tilde{C}})$
 is given by solving the following optimization problem:
 \begin{align}
  \label{eq:prf-f}
  &
  \min_{w^*_C}
  ~~~
  w^{*\top}_C x^\prime_i
  ~~~ ~~~
  {\rm s.t.}
  ~~~
  \Big\|
  w^*_C -
  \frac{1}{2}
  \Big(
  \hat{w}
  -
  \frac{C}{\tilde{C}}
  (g(\hat{w}) - \hat{w})
  \Big)
  \Big\|^2
  \le
  \Big(
  \frac{1}{2}
  \Big\|
  \hat{w} + \frac{C}{\tilde{C}}(g(\hat{w}) - \hat{w})
  \Big\|
  \Big)^2.
 \end{align}
 Using
 Proposition
 \ref{prop:ball},
 the solution of
 \eq{eq:prf-f}
 is given as
 \begin{align*}
  {LB}(w^{*\top}_C x^\prime_i | \hat{w}_{\tilde{C}})
  &=
  \frac{1}{2}
  x^{\prime \top}_i
  \Big(
  \hat{w}
  -
  \frac{C}{\tilde{C}}
  (g(\hat{w}) - \hat{w})
  \Big)
  -
  \|
  x^\prime_i
  \|
  \Big\|
  \frac{1}{2}
  \Big(
  \hat{w} + \frac{C}{\tilde{C}}(g(\hat{w}) - \hat{w})
  \Big)
  \Big\|
  \\
  &
  \le
  \frac{1}{2}
  x^{\prime \top}_i
  \Big(
  \hat{w}
  -
  \frac{C}{\tilde{C}}
  (g(\hat{w}) - \hat{w})
  \Big)
  -
  \frac{1}{2}
  \|
  x^\prime_i
  \|
  \Big(
  \Big|
  1 - \frac{C}{\tilde{C}}
  \Big|
  \|
  \hat{w}
  \|
  +
  \frac{C}{\tilde{C}}
  \|
  g(\hat{w})
  \|
  \Big)
  \\
  &
  =
  \mycase{
  \phantom{+} \alpha(\hat{w}_{\tilde{C}}, x^\prime_i)
  -
  \frac{1}{\tilde{C}}
  (
  \beta(\hat{w}_{\tilde{C}}, x^\prime_i)
  +
  \gamma(g(\hat{w}_{\tilde{C}}), x^\prime_i)
  ) C,
  \hspace*{15pt}
  \text{if }
  C \ge \tilde{C},
  \\
  - \beta(\hat{w}_{\tilde{C}}, x^\prime_i)
  +
  \frac{1}{\tilde{C}}
  (
  \alpha(\hat{w}_{\tilde{C}}, x^\prime_i) + \delta(g(\hat{w}_{\tilde{C}}), x^\prime_i)
  ) C,
  \hspace*{15pt}
  \text{if }
  C < \tilde{C}.
  }
 \end{align*}
 Similarly,
 the upper bound
 ${UB}(w^{*\top}_C x^\prime_i | \hat{w}_{\tilde{C}})$
 is given by solving the following optimization problem
 \begin{align}
  \label{eq:prf-g}
  \max_{w^*_C}
  ~~~
  w^{*\top}_C x^\prime_i
  ~~~ ~~~
  {\rm s.t.}
  ~~~
  \Big\|
  w^*_C -
  \frac{1}{2}
  \Big(
  \hat{w}
  -
  \frac{C}{\tilde{C}}
  (g(\hat{w}) - \hat{w})
  \Big)
  \Big\|^2
  \le
  \Big(
  \frac{1}{2}
  \Big\|
  \hat{w} + \frac{C}{\tilde{C}}(g(\hat{w}) - \hat{w})
  \Big\|
  \Big)^2,
 \end{align}
 and
 the solution of
 \eq{eq:prf-g}
 is given as
 \begin{align*}
  {UB}(w^{*\top}_C x^\prime_i | \hat{w}_{\tilde{C}})
  &=
  \frac{1}{2}
  x^{\prime \top}_i
  \Big(
  \hat{w}
  -
  \frac{C}{\tilde{C}}
  (g(\hat{w}) - \hat{w})
  \Big)
  +
  \|
  x^\prime_i
  \|
  \Big\|
  \frac{1}{2}
  \Big(
  \hat{w} + \frac{C}{\tilde{C}}(g(\hat{w}) - \hat{w})
  \Big)
  \Big\|
  \\
  &
  \ge
  \frac{1}{2}
  x^{\prime \top}_i
  \Big(
  \hat{w}
  -
  \frac{C}{\tilde{C}}
  (g(\hat{w}) - \hat{w})
  \Big)
  +
  \frac{1}{2}
  \|
  x^\prime_i
  \|
  \Big(
  \Big|
  1 - \frac{C}{\tilde{C}}
  \Big|
  \|
  \hat{w}
  \|
  +
  \frac{C}{\tilde{C}}
  \|
  g(\hat{w})
  \|
  \Big)
  \\
  &
  =
  \mycase{
  -\beta(\hat{w}_{\tilde{C}}, x^\prime_i)
  +
  \frac{1}{\tilde{C}}
  (
  \alpha(\hat{w}_{\tilde{C}}, x^\prime_i)
  +
  \delta(g(\hat{w}_{\tilde{C}}), x^\prime_i)
  ) C,
  \hspace*{15pt}
  \text{if }
  C \ge \tilde{C},
  \\
  \phantom{+}\alpha(\hat{w}_{\tilde{C}}, x^\prime_i)
  -
  \frac{1}{\tilde{C}}
  (
  \beta(\hat{w}_{\tilde{C}}, x^\prime_i) + \gamma(g(\hat{w}_{\tilde{C}}), x^\prime_i)
  ) C,
  \hspace*{15pt}
  \text{if }
  C < \tilde{C}.
  }
 \end{align*}
\end{proof}

\begin{rema}
We note that
the idea of using
Propositions~\ref{prop:VI} and \ref{prop:ball}
for proving
Lemma~\ref{lemm:decision-score-bounds-with-approximate-solutions}
is inspired from recent studies on
safe screening~\cite{ElGhaoui2012,Xiang2011,Ogawa2013,Liu2014,Wang2014}.
Safe screening has been introduced in the context of sparse modeling.
It allows us to identify sparse features or instances
before actually solving the optimization problem.
A key technique used in those studies is to bound
Lagrange multipliers at the optimal solution
(Lagrange multiplier values
at the optimal solution tell us which features or instances are active or non-active)
in somewhat similar way as we did in \S\ref{sec:theory}.
Our main contribution is to borrow this idea for
representing
a validation error lower bound
as a function of the regularization parameter,
and show that it can be used for finding an approximately optimal regularization parameter with theoretical guarantee.
\end{rema}

\begin{figure}
   \floatbox[{\capbeside\thisfloatsetup{capbesideposition={right,top},capbesidewidth=65mm}}]{figure}[\FBwidth]
  { \hspace*{-7mm}
  \caption{An illustrative example of Algorithm 2 behavior.
  The blue real lines represent the validation error lower bound.
  The red chained lines and green dashed lines indicate
  the current best validation error upper bound
  $E^{\rm best}_v$
  and
  $E^{\rm best}_v - \veps$,
  respectively.
  If the blue validation error lower bound falls below the green ones,
  the validation error can be smaller
  by
  $\eps$
  than the current best.
  In such a case,
  the algorithm computes the next approximate solution,
  and update the validation error lower bound
  based on the new approximate solution.
  The plot is an enlarged view of
  the region
  from $\tilde{C}_{13}$ to $\tilde{C}_{17}$
  in
  \figurename~\ref{fig:ion_sec6} (a)
  in \S\ref{sec:exp}. }
  }
  {
    \hspace{5mm}
    \includegraphics[width=0.45\textwidth]{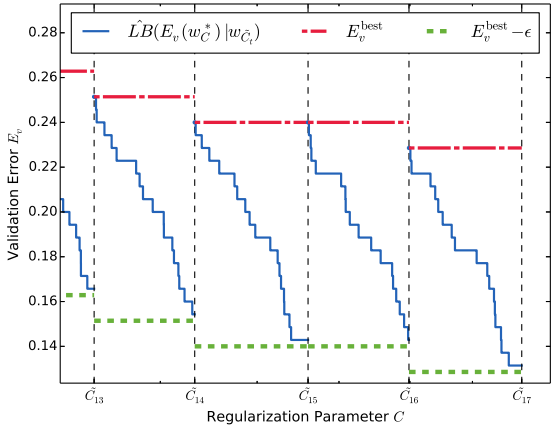}
  }
  \label{fig:ion_lbve}
\end{figure}

\section{Details of the speed-up tricks for finding an $\veps$-approximate regularization parameter}
\label{app:extensions}

In this appendix,
we first describe
two modifications of the basic algorithm for finding an $\veps$-approximate regularization parameter
presented in \S\ref{subs:find.apprx}
for further speed-up.
%
%
%

%
\paragraph{Trick1}
The efficiency of the algorithm
depends on
how far one can step forward in each iteration.
We see
in
\eq{eq:nextC}
that
the step size
$\tilde{C}_{t+1} - \tilde{C}_t$
is large
if the current minimum validation error upper bound
$E_v^{\rm best}$
is small.
In other words,
the step size will be small until
we have sufficiently small
$E_v^{\rm best}$.
It suggests that,
if we can find
small enough
$E_v^{\rm best}$
at an \emph{earlier} stage of the algorithm,
we can reduce the total computational cost of the algorithm.
In order to find sufficiently small
$E_v^{\rm best}$
as early as possible,
we propose a simple heuristic approach,
where
we first roughly search over the entire range by a rough grid search.
%
%

\paragraph{Trick2}
Our next modification for speed-up is to use
\begin{align*}
 {LB}(E_v(w^*_C) | \hat{w}_{\tilde{C}_t}, \hat{w}_{\tilde{C}_{t+1}})
 :=
 \max\{
 {LB}(E_v(w^*_C) | \hat{w}_{\tilde{C}_t}),
 {LB}(E_v(w^*_C) | \hat{w}_{\tilde{C}_{t+1}})
 \},
\end{align*}
for computing the validation error lower bound
in
$C \in [\tilde{C}_{t}, \tilde{C}_{t+1}]$.
It provides a tighter validation error lower bounds
than
using
${LB}(E_v(w^*_C) | \hat{w}_{\tilde{C}_t})$
alone,
meaning that larger step might be allowed
in each iteration.
However,
we cannot actually compute
${LB}(E_v(w^*_C) | \hat{w}_{\tilde{C}_{t+1}})$
before we fix
$\tilde{C}_{t+1}$.
We thus propose a simple trial-and-error approach.
Specifically,
we step forward a little bit further than \eq{eq:nextC}
when we select the next
$\tilde{C}_{t+1}$.
After we fix $\tilde{C}_{t+1}$,
we compute an approximate solution
$\hat{w}_{\tilde{C}_{t+1}}$
and then check
whether the validation error
$E_v(w^*_C)$
is not smaller
by $\veps$
than the current minimum
for
$C \in [\tilde{C}_t, \tilde{C}_{t+1}]$
by using now available
${LB}(E_v(w^*_C) | \hat{w}_{\tilde{C}_{t}}, \hat{w}_{\tilde{C}_{t+1}})$.
%



Algorithm 3
is the pseudo-code of the proposed algorithm along with tricks 1 and 2.

\makeatletter
\renewcommand{\ALG@name}{Algorithm 3}
\renewcommand{\thealgorithm}{}
\makeatother
\begin{algorithm}[!ht]
  \caption{Finding an $\veps$-approximate regularization parameter
 with approximate solutions
 using tricks 1 and 2}
  \label{alg:op3}
  \begin{algorithmic}[1]
    \REQUIRE
      $\{(x_i, y_i)\}_{i \in [n]}$,
      $\{(x^\prime_i, y^\prime_i)\}_{i \in [n^\prime]}$,
      $C_l$,
      $C_u$,
      $\veps$,
      $m$,
      $\rho$
    \STATE $C^{\rm best} \lA C_l$, $E_v^{\rm best} \lA 1$
    \STATE $s \lA \frac{\log_{10}(C_u) - \log_{10}(C_l)}{m} $
    \FOR{ $h = 0$ to $m-1$ }
      \STATE $\bar{C}_{h} \lA 10^{(\log_{10}(C_l) + h \times s)} $
      \STATE $\hat{w}_{\bar{C}_h}$ $\lA$ solve \eq{eq:the-class-of-problems} approximately for $C = \bar{C}_h$ 
      \IF{${UB}(E_v(w^*_{\bar{C}_h}) | \hat{w}_{\bar{C}_h}) < E_v^{\rm best} $}
        \STATE $E_v^{\rm best} \lA {UB}(E_v(w^*_{\bar{C}_h}) | \hat{w}_{\bar{C}_h})$, $C^{\rm best} \lA \bar{C}_h$
      \ENDIF
    \ENDFOR
    \STATE $\bar{C}_m \lA C_u$ , $t \lA 1$
    \FOR{ $h = 0$ to $m-1$ }
      \STATE $\tilde{C}_t \lA \bar{C}_{h}$ , $\hat{w}_{\tilde{C}_t} \lA \hat{w}_{\bar{C}_{h}}$
      \WHILE{$\tilde{C}_t \le \bar{C}_{h+1}$}
        \STATE Set ${C}^{\rm tmp}$ by \eq{eq:nextC.aggr} using $\hat{w}_{\tilde{C}_t}$
        \IF{${C}^{\rm tmp} > \bar{C}_{h+1}$ }
          \STATE Set $C^{\rm tmp}$ by \eq{eq:nextC.right} using $\hat{w}_{\tilde{C}_t}$
          \IF{${C}^{\rm tmp} > \bar{C}_{h+1}$ }
            \STATE break while loop
          \ENDIF
        \ENDIF
        \STATE $\hat{w}_{{C}^{\rm tmp}}$ $\lA$ solve \eq{eq:the-class-of-problems} approximately for $C = {C}^{\rm tmp}$
        \STATE Compute ${UB}(E_v(w^*_{{C}^{\rm tmp}}) | \hat{w}_{{C}^{\rm tmp}})$ by \eq{eq:valid-err-UB-tilde}.
        \IF{${UB}(E_v(w^*_{{C}^{\rm tmp}}) | \hat{w}_{{C}^{\rm tmp}}) < E_v^{\rm best} $}
          \STATE $E_v^{\rm best} \lA {UB}(E_v(w^*_{{C}^{\rm tmp}}) | \hat{w}_{{C}^{\rm tmp}})$
          \STATE $C^{\rm best} \lA {C}^{\rm tmp}$
        \ENDIF
        \STATE $r \lA 0$
        \STATE RecursiveCheck($\tilde{C}_t, {C}^{\rm tmp}, \hat{w}_{\tilde{C}_t}, \hat{w}_{{C}^{\rm tmp}}, r $)
        \STATE $\tilde{C}_{t+r+1} \lA {C}^{\rm tmp}$, $\hat{w}_{\tilde{C}_{t+r+1}} \lA \hat{w}_{{C}^{\rm tmp}}$
        \STATE $t \lA t+r+1$
      \ENDWHILE
    \ENDFOR
    \ENSURE
    $C^{\rm best} \in \cC(\veps)$.
  \end{algorithmic}
\end{algorithm}

There are two additional input parameters
$m \in \bN$
and
$\rho > 1$.
The former is used for trick1,
where we initially compute $m$ approximate solutions
for regularization parameter values evenly allocated in the interval
$[C_l, C_u]$
in the logarithmic scale.
Trick1 is described at
lines 2-9 in
Algorithm 3.

\makeatletter
\renewcommand{\ALG@name}{Algorithm 4}
\renewcommand{\thealgorithm}{}
\makeatother
\begin{algorithm}[t]
 \caption{RecursiveCheck $( C(L), C(R), \hat{w}_{C(L)}, \hat{w}_{C(R)}, r)$ }
 \label{alg:rec.check}
 \begin{algorithmic}[]
  \STATE Compute $C^R(\hat{w}_{C(L)})$ in \eq{eq:nextC.left}.
  \STATE Compute $C^L(\hat{w}_{C(R)})$ in \eq{eq:nextC.right}.
      \IF{$C^L(\hat{w}_{C(R)}) < C^R(\hat{w}_{C(L)})$}
      \STATE return
    \ELSE
      \STATE $r \lA r+1$
      \STATE $\tilde{C}_{t+r} $ $\lA$ $ \frac{1}{2}(C^L(\hat{w}_{C(R)}) + C^R(\hat{w}_{C(L)}) ) $
      \STATE $\hat{w}_{\tilde{C}_{t+r}}$ $\lA$ solve \eq{eq:the-class-of-problems} approximately for $C = \tilde{C}_{t+r}$
      \IF{${UB}(E_v(w^*_{\tilde{C}_{t+r}}) | \hat{w}_{\tilde{C}_{t+r}}) < E_v^{\rm best} $}
        \STATE $E_v^{\rm best} \lA {UB}(E_v(w^*_{\tilde{C}_{t+r}}) | \hat{w}_{\tilde{C}_{t+r}})$
        \STATE $C^{\rm best} \lA \tilde{C}_{t+r}$
      \ENDIF
      \STATE RecursiveCheck($C(L), \tilde{C}_{t+r}, \hat{w}_{C(L)}, \hat{w}_{\tilde{C}_{t+r}}, r$)
      \STATE RecursiveCheck($\tilde{C}_{t+r}, C(R), \hat{w}_{\tilde{C}_{t+r}},\hat{w}_{C(R)}, r$)
    \ENDIF
 \end{algorithmic}
\end{algorithm}

The latter
$\rho > 1$
is used for trick2,
where the next regularization parameter value
is determined in trial-and-error manner.
To formally describe trick2,
let us define a set
$\Gamma$
as a function of $w$
in the following way
\begin{align*}
 \Gamma (w_{\tilde{C}})
 :=
 \Big\{
 \frac{
 \beta(w_{\tilde{C}}, x^\prime_i)
 }{
 \alpha(w_{\tilde{C}}, x^\prime_i) + \delta(g(w_{\tilde{C}}), x^\prime_i)
 } \tilde{C}
 \Big\}_{i \in \cP}
 \cup
 \Big\{
 \frac{
 \alpha(w_{\tilde{C}}, x^\prime_i)
 }{
 \beta(w_{\tilde{C}}, x^\prime_i) + \gamma(g(w_{\tilde{C}}), x^\prime_i)
 } \tilde{C}
 \Big\}_{i \in \cN}.
\end{align*}
Then,
our initial trial step is written as
\begin{align}
 \label{eq:nextC.aggr}
 C^{\rm tmp}
 :=
 (\lfloor
 n^\prime
 ({LB}(E_v(w^*_{\tilde{C}_t} ) | \hat{w}_{\tilde{C}_t})
 \! - \!
 E_v^{\rm best}
 \! + \!
 \rho \veps)
 \rfloor
 \! + \!
 1)^{\rm th}(\Gamma (\hat{w}_{\tilde{C}_t}) ),
\end{align}

where
$\rho > 1$
represents how far we step forward.
We then compute an approximate solution
$\hat{w}_{C^{\rm tmp}}$,
and obtain a validation error lower bound
${LB}(E_v(w^*_C) | \hat{w}_{\tilde{C}_t}, \hat{w}_{\tilde{C}^{\rm tmp}})$
by combining
${LB}(E_v(w^*_C) | \hat{w}_{\tilde{C}_t})$
and
${LB}(E_v(w^*_C) | \hat{w}_{C^{\rm tmp}})$.
For accepting this trial step,
we need to make sure that the lower bounds are not smaller
by $\veps$
than the current best
$E_v^{\rm best}$
for any
$C \in [C_t, C^{\rm tmp}]$.
To this end,
we investigate
where the two lower bounds
${LB}(E_v(w^*_C) | \hat{w}_{\tilde{C}_t})$
and
${LB}(E_v(w^*_C) | \hat{w}_{C^{\rm tmp}})$
go below
$E^{\rm best}_v - \veps$.
To formulate this,
let us define the following two functions
\begin{align}
 \label{eq:nextC.left}
 C^R(\hat{w}_{C(L)})
 :=
 (\lfloor
 n^\prime
 ({LB}(E_v( w^*_{C(L)} ) | \hat{w}_{C(L)})
 ~~\!-\!
 E_v^{\rm best}
 \!+\!
 \veps)
 \rfloor \!+\! 1)^{\rm th}(\Gamma(\hat{w}_{C(L)})),
\end{align}
\begin{align}
 \label{eq:nextC.right}
 C^L(\hat{w}_{C(R))})
 :=
 (\lfloor
 n^\prime
 ({LB}(E_v( w^*_{C(R)} ) | \hat{w}_{C(R)})
 ~~\!-\!
 E_v^{\rm best}
 \!+\!
 \veps)
 \rfloor \!+\! 1)^{\rm TH}(\Delta(\hat{w}_{C(R)})),
\end{align}
where,
for the latter,
we define
\begin{align*}
 \Delta(w_{\tilde{C}})
 :=
 \Big\{
 \frac{
 \alpha(w_{\tilde{C}}, x^\prime_i)
 }{
 \beta(w_{\tilde{C}}, x^\prime_i) + \gamma(g(w_{\tilde{C}}), x^\prime_i)
 } \tilde{C}
 \Big\}_{i \in \cP}
 \cup
 \Big\{
 \frac{
 \beta(w_{\tilde{C}}, x^\prime_i)
 }{
 \alpha(w_{\tilde{C}}, x^\prime_i) + \delta(g(w_{\tilde{C}}), x^\prime_i)
 } \tilde{C}
 \Big\}_{i \in \cN},
\end{align*}
and denote the $k^{\rm TH}$-largest element of $\Delta$ as
$k^{\rm TH}(\Delta)$ for any natural number $k$.
The trial step to
$C^{\rm tmp}$
is accepted if
\begin{align*}
 C^L(\hat{w}_{C^{\rm tmp}}) < C^R(\hat{w}_{\tilde{C}_t}).
\end{align*}
If not,
we need to shrink the trial step by using the procedure described in
Algorithm 4.
Briefly speaking,
Algorithm 4
conducts a bisection search
until we find two approximate solutions
$\hat{w}_{C(L)}$
and
$\hat{w}_{C(R)}$
that satisfy
$C^L(\hat{w}_{C(L)}) < C^R(\hat{w}_{C(L)})$.
We note that,
with the use of trick2,
the sequence of the regularization parameter values
$\tilde{C}_1, \ldots, \tilde{C}_T$
is not necessarily in increasing order because
they are computed in trial-and-error manner.

\section{Approximate regularization path in terms of validation errors}
\label{app:path}
In this appendix,
we describe
the details of
approximate regularization path
in terms of validation errors
and its experimental results.

By slightly modifying the algorithm,
we can compute
an $\veps$-approximate \emph{regularization path}
whose approximation level is measured
in terms of the validation errors.
Such an $\veps$-approximate regularization path is formulated as a function
\begin{align*}
 W: [C_l, C_u] \to \RR^d, C \mapsto w,
\end{align*}
such that
\begin{align*}
 |E_v(W(C)) - E_v(w^*_C)| \le \veps,
 \forall
 C \in [C_l, C_u].
\end{align*}
In order to compute $W$,
we need an upper bound of the validation errors as well as a lower bound
represented as a function of the regularization parameter.
Given a solution
$\hat{w}_{\tilde{C}}$
for a regularization parameter
$\tilde{C}$,
our basic idea
is to go forward the regularization path
as long as
the difference between the upper
 and the lower bounds are not greater than $\veps$.
We note that,
the approximation quality of our approximate regularization path
is measured in terms of the validation errors,
which is more advantageous
for hyper-parameter tuning tasks
than existing approaches
\cite{Giesen2012b,Giesen2012a,Giesen2014,Mairal2012}
in which
the approximation quality is evaluated in terms of the objective function values.

%
%
We compute a validation error upper bound
based on the following simple facts:
\begin{subequations} 
 \label{eq:correctly-classified}
\begin{align}
 \label{eq:correctly-classified.a}
 \hspace*{-2.5mm}
 y^\prime_i = +1
 \text{ and }
 {LB}(w^{*\top}_C x^\prime_i | w_{\tilde{C}}) \ge 0
 \Rightarrow
 \text{correctly-classified},
 \\
 \label{eq:correctly-classified.b}
 \hspace*{-2.5mm}
 y^\prime_i = -1
 \text{ and }
 {UB}(w^{*\top}_C x^\prime_i | w_{\tilde{C}}) \le 0
 \Rightarrow
 \text{correctly-classified}.
\end{align}
\end{subequations}
Based on these facts,
we have a lemma for validation error upper bounds similar to
Lemma~\ref{lemm:mis-classified-range}:
\begin{lemm}
 \label{lemm:correctly-classified-range}
 For a validation instance with
 $y^\prime_i = +1$,
 if
 \begin{align*}
  \tilde{C}
  <
  C
  \le
  \frac{
  \alpha(\hat{w}_{\tilde{C}}, x^\prime_i)
  }{
  \beta(\hat{w}_{\tilde{C}}, x^\prime_i) + \gamma(g(\hat{w}_{\tilde{C}}), x^\prime_i)
  } \tilde{C}
  ~~~ {\rm or} ~~~
  \frac{
  \beta(\hat{w}_{\tilde{C}}, x^\prime_i)
  }{
  \alpha(\hat{w}_{\tilde{C}}, x^\prime_i) + \delta(g(\hat{w}_{\tilde{C}}), x^\prime_i)
  } \tilde{C}
  \le
  C
  <
  \tilde{C},
 \end{align*}
 then
 the validation instance
 $(x^\prime_i, y^\prime_i)$
 is correctly-classified.
 Similarly,
 for a validation instance with
 $y^\prime_i = -1$,
 if
 \begin{align*}
  \tilde{C}
  <
  C
  \le
  \frac{
  \beta(\hat{w}_{\tilde{C}}, x^\prime_i)
  }{
  \alpha(\hat{w}_{\tilde{C}}, x^\prime_i) + \delta(g(\hat{w}_{\tilde{C}}), x^\prime_i)
  } \tilde{C}
 ~~~ { \rm or} ~~~
  \frac{
  \alpha(\hat{w}_{\tilde{C}}, x^\prime_i)
  }{
  \beta(\hat{w}_{\tilde{C}}, x^\prime_i) + \gamma(g(\hat{w}_{\tilde{C}}), x^\prime_i)
  } \tilde{C}
  \le
  C
  <
  \tilde{C},
 \end{align*}
 then
 the validation instance
 $(x^\prime_i, y^\prime_i)$
 is correctly-classified.
\end{lemm}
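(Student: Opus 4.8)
The plan is to mirror exactly the one-line argument used for Lemma~\ref{lemm:mis-classified-range}, but starting from the \emph{correct}-classification facts \eq{eq:correctly-classified} rather than the mis-classification facts \eq{eq:mis-classified}. The strategy is to substitute the piecewise-linear score bounds of Lemma~\ref{lemm:decision-score-bounds-with-approximate-solutions} into the two sufficient conditions for correct classification, and then solve the resulting linear inequalities in $C$ separately on each of the two regimes $C > \tilde{C}$ and $C < \tilde{C}$.

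Concretely, for $y^\prime_i = +1$ the relevant condition from \eq{eq:correctly-classified.a} is ${LB}(w^{*\top}_C x^\prime_i | \hat{w}_{\tilde{C}}) \ge 0$. On the regime $C > \tilde{C}$ the lower bound in \eq{eq:DS-LB-approx} reads $\alpha - \frac{1}{\tilde{C}}(\beta + \gamma)C$ (suppressing the arguments), so the inequality becomes $\alpha \ge \frac{1}{\tilde{C}}(\beta + \gamma)C$; since $\beta, \gamma \ge 0$ I may divide by $\beta + \gamma$ without flipping the inequality, obtaining the upper cutoff $C \le \frac{\alpha}{\beta + \gamma}\tilde{C}$, which is the first displayed range. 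On the regime $C < \tilde{C}$ the lower bound instead reads $-\beta + \frac{1}{\tilde{C}}(\alpha + \delta)C$, so ${LB} \ge 0$ rearranges to $C \ge \frac{\beta}{\alpha + \delta}\tilde{C}$, giving the second displayed range. The case $y^\prime_i = -1$ is handled identically via \eq{eq:correctly-classified.b}: imposing ${UB}(w^{*\top}_C x^\prime_i | \hat{w}_{\tilde{C}}) \le 0$ and substituting the two branches of \eq{eq:DS-UB-approx} produces the two stated ranges after the analogous one-line rearrangements.

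Since each of the four computations is a single linear inequality with nonnegative coefficients, there is essentially no analytical obstacle; the only points needing care are bookkeeping ones. First, I must keep the two regimes $C > \tilde{C}$ and $C < \tilde{C}$ strictly separate, because the lower and upper bounds swap their functional forms across $C = \tilde{C}$ (compare the two cases in \eq{eq:DS-LB-approx} and \eq{eq:DS-UB-approx}); using the wrong branch would yield the wrong cutoff. Second, I should note that the nonnegativity $\alpha, \beta, \gamma, \delta \ge 0$ recorded in Lemma~\ref{lemm:decision-score-bounds-with-approximate-solutions} is exactly what guarantees the division steps preserve the inequality direction and that the resulting ratios are well defined and nonnegative. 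The only genuinely degenerate possibility is a vanishing denominator (e.g.\ $\beta + \gamma = 0$), in which case the corresponding score bound is constant in $C$ and the instance is correctly classified throughout that regime; this is consistent with reading the cutoff ratio as $+\infty$, so the stated ranges remain valid under that convention.
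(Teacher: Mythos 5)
Your proposal is correct and follows exactly the route the paper intends: the paper's entire justification is the one-line remark that the lemma follows by applying the score bounds \eq{eq:DS-approx} to the correct-classification conditions \eq{eq:correctly-classified}, and your four case-by-case linear rearrangements (with the nonnegativity of $\alpha,\beta,\gamma,\delta$ justifying the divisions) are precisely the omitted algebra. The remark on vanishing denominators is a sensible extra precaution not addressed in the paper.
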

This lemma can be easily shown by applying
\eq{eq:DS-approx}
to
\eq{eq:correctly-classified}.

Using
Lemma~\ref{lemm:correctly-classified-range},
an upper bound of the validation errors is
represented
as a function of the regularization parameter
$C$
in the following form.
\begin{theo}
 \label{theo:valid-err-UB}
 Using an approximate solution
 $\hat{w}_{\tilde{C}}$
 for a regularization parameter
 $\tilde{C}$,
 the validation error
 $E_v(w^*_C)$
 for any
 $C > 0$ other than $\tilde{C}$
 satisfies
 \begin{align}
  \label{eq:valid-err-UB}
  E_v(w^*_C)
  &
  \leq
  {UB}(E_v(w^*_C) | \hat{w}_{\tilde{C}})
  \\
  \nonumber
  &
  :=
  1-
  \frac{1}{n^\prime}
  \Bigg(
  \sum_{y^\prime_i = +1}
  I \bigg(
  \tilde{C}
  <
  C
  \le
  \frac{
  \alpha(\hat{w}_{\tilde{C}}, x^\prime_i)
  }{
  \beta(\hat{w}_{\tilde{C}}, x^\prime_i) + \gamma(g(\hat{w}_{\tilde{C}}), x^\prime_i)
  } \tilde{C}
  \bigg)
  \\
  \nonumber
  &
  ~~~~~ ~~
  +
  \sum_{y^\prime_i = +1}
  I
  \bigg(
  \frac{
  \beta(\hat{w}_{\tilde{C}}, x^\prime_i)
  }{
  \alpha(\hat{w}_{\tilde{C}}, x^\prime_i) + \delta(g(\hat{w}_{\tilde{C}}), x^\prime_i)
  } \tilde{C}
  \le
  C
  <
  \tilde{C}
  \bigg)
  \\
  \nonumber
  &
  ~~~~~ ~~
  +
  \sum_{y^\prime_i = -1}
  I
  \bigg(
  \tilde{C}
  <
  C
  \le
  \frac{
  \beta(\hat{w}_{\tilde{C}}, x^\prime_i)
  }{
  \alpha(\hat{w}_{\tilde{C}}, x^\prime_i) + \delta(g(\hat{w}_{\tilde{C}}), x^\prime_i)
  } \tilde{C}
  \bigg)
  \\
  \nonumber
  &
  ~~~~~ ~~
  +
  \sum_{y^\prime_i = -1}
  I
  \bigg(
  \frac{
  \alpha(\hat{w}_{\tilde{C}}, x^\prime_i)
  }{
  \beta(\hat{w}_{\tilde{C}}, x^\prime_i) + \gamma(g(\hat{w}_{\tilde{C}}), x^\prime_i)
  } \tilde{C}
  \le
  C
  <
  \tilde{C}
  \bigg)
  \Bigg).
  \nonumber
 \end{align}
\end{theo}
Theorem
\ref{theo:valid-err-UB}
is a direct consequence of
Lemma
\ref{lemm:correctly-classified-range}.

\subsection{Algorithm}

Algorithm 5
is the pseudo-code of
our approximate regularization path.
\makeatletter
\renewcommand{\ALG@name}{Algorithm 5}
\renewcommand{\thealgorithm}{}
\makeatother

\begin{algorithm}[t]
 \caption{Tracking an $\veps$-Approximate Regularization Path}
 \label{alg:track.apprx}
 \begin{algorithmic}[1]
   \REQUIRE
   $\{(x_i, y_i)\}_{i \in [n]}$,
   $\{(x^\prime_i, y^\prime_i)\}_{i \in [n^\prime]}$,
   $C_l$,
   $C_u$,
   $\veps$
   \STATE $t \lA 1$, $\tilde{C}_t \lA C_l$
   \WHILE{$\tilde{C}_t \le C_u$}
   \STATE Solve \eq{eq:the-class-of-problems} approximately at $C = \tilde{C}_t$ and obtain $\hat{w}_{\tilde{C}_t}$
   \STATE Set $\tilde{C}_{t+1}$ by \eq{eq:nextC.path}
   \STATE $t \lA t+1$
   \ENDWHILE
   \STATE $T \lA t-1$
   \ENSURE $C_{1}, \ldots , C_{T+1}$, $\hat{w}_{C_1}, \ldots, \hat{w}_{C_T}$
 \end{algorithmic}
\end{algorithm}

\begin{figure}
    \floatbox[{\capbeside\thisfloatsetup{capbesideposition={right,top},capbesidewidth=65mm}}]{figure}[\FBwidth]
  { \hspace*{-7mm}
  \vspace{-1.5em}
  \caption{An illustrative image of tacking the approximate regularization path in terms of validation errors algorithm behavior.
  The blue real lines and red lines represent the validation error lower and upper bounds, respectively.
  The green dashed lines indicate
  the difference between the validation error lower and upper bounds.
  If the green dashed is greater than or equal to $\veps$ ,
  we miss tracking $\veps$ approximation path.
  In such a case,
  the algorithm computes the next approximate solution,
  and update the validation error lower and upper bounds
  based on the new approximate solution.
  }
  }
  { \hspace{5mm}
    \includegraphics[width=0.45\textwidth]{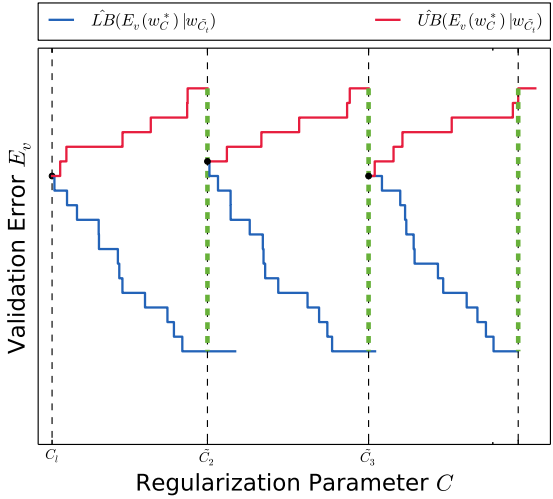}
  }
  \label{fig:ion_path}
\end{figure}

The main difference between
Algorithm 2
and
Algorithm 5
is in
how to determine the next regularization parameter value
$\tilde{C}_{t+1}$.
For tracking an approximate solution path,
we need to find
the smallest
$\tilde{C}_{t+1} > C$
such that
the difference between the upper and the lower bounds
$
{UB}(E_v(w^*_{C}) | \hat{w}_{C})
 -
{LB}(E_v(w^*_{C}) | \hat{w}_{C})
$
is greater than or equal to $\veps$.
To formulate this,
let us define
\begin{align*}
 \cP^\prime
 := \{i \in [n^\prime] |
  y^\prime_i = +1 ,
   {LB}(w^{*\top}_{\tilde{C}_t} x^\prime_i | \hat{w}_{\tilde{C}_t}) \geq 0\},
 \cN^\prime
 := \{i \in [n^\prime] |
 y^\prime_i = -1 ,
  {UB}(w^{*\top}_{\tilde{C}_t} x^\prime_i | \hat{w}_{\tilde{C}_t}) \leq 0\}.
\end{align*}
and
\begin{align*}
 \Lambda (\hat{w}_{\tilde{C}_t})
 :=
 &
 \Big\{
 \frac{
 \beta(\hat{w}_{\tilde{C}_t}, x^\prime_i)
 }{
 \alpha(\hat{w}_{\tilde{C}_t}, x^\prime_i) + \delta(g(\hat{w}_{\tilde{C}_t}), x^\prime_i)
 } \tilde{C}_t
 \Big\}_{i \in \cP \cup \cN^\prime}
 \cup
 \Big\{
 \frac{
 \alpha(\hat{w}_{\tilde{C}_t}, x^\prime_i)
 }{
 \beta(\hat{w}_{\tilde{C}_t}, x^\prime_i) + \gamma(g(\hat{w}_{\tilde{C}_t}), x^\prime_i)
 } \tilde{C}_t
 \Big\}_{i \in \cN \cup \cP^\prime},
\end{align*}
Then,
$\tilde{C}_{t+1}$
that meets the above requirement is formulated as
\begin{align}
 \label{eq:nextC.path}
 \tilde{C}_{t+1}
 ~\lA~ & (\lfloor
 n^\prime
 ( {LB}(E_v(w^*_{\tilde{C}_t}) | \hat{w}_{\tilde{C}_t})
 - {UB}(E_v(w^*_{\tilde{C}_t}) | \hat{w}_{\tilde{C}_t})
 +
 \veps)
 \rfloor + 1)^{\rm th}(\Lambda).
\end{align}
Figure 5 depicts how $C_{t+1}$ is determined.

Using the output of
Algorithm 5
,
our approximate regularization path is written as
\begin{align*}
W : [C_l, C_u] \rightarrow \mathbb{R}^d,
  \ C \mapsto \sum_{i=1}^{T}
    \bm 1_{[C_i, C_{i+1})}(C) \hat{w}_{C_i},
\end{align*}
where
\begin{align*}
  \bm 1_{[C_i, C_{i+1})}(C) = \begin{cases}
    1 &  \text{if } \ \ C \in  [C_i, C_{i+1}),  \\
    0 &  \text{if } \ \ C \notin [C_i, C_{i+1}).
  \end{cases}
\end{align*}

In approximate regularization path computation,
we need a special treatment
in a pathological situation
that
the signs of the scores of multiple validation instances change at one time
at a regularization parameter value
$C$.
Such a pathological situation is formally stated as follows.
Let
\begin{align*}
 \Omega
 := ~& \{i \in [n^\prime] |  y^\prime_i = +1 ,
   {LB}(w^{*\top}_{\tilde{C}_t} x^\prime_i | \hat{w}_{\tilde{C}_t}) = 0\}
 \cup
 \{i \in [n^\prime] | y^\prime_i = -1 ,
   {UB}(w^{*\top}_{\tilde{C}_t} x^\prime_i | \hat{w}_{\tilde{C}_t}) = 0\}.
\end{align*}
Then,
if the size of
$\Omega$
is greater than
\begin{align*}
    (\lfloor
 n^\prime
 ( {LB}(E_v(w^*_{\tilde{C}_t} ) | \hat{w}_{\tilde{C}_t})
 - {UB}(E_v(w^*_{\tilde{C}_t} ) | \hat{w}_{\tilde{C}_t})
 +
 \veps)
 \rfloor + 1),
\end{align*}
Algorithm 5
does not work properly.
Although
such a pathological situation can be considered as an exceptional case
and
treated by tedious book-keeping operations,
in the following experiments,
we simply add an constraint that
$\tilde{C}_{t+1} - \tilde{C}_{t} \ge 10^{-6}$.

\subsection{Experiments}
Here,
we describe the experimental results on approximate regularization path computation.
The experimental setup is same as that in \S\ref{sec:exp}.
Since we cannot use speed-up tricks here,
we have two algorithm options. In the first option ({\bf op4}), we used
optimal solutions $\{w^*_{\tilde{C}_t}\}_{t \in [T]}$ for computing CV error lower bounds.
In the second option ({\bf op5}),
we instead used approximate solutions  $\{\hat{w}_{\tilde{C}_t}\}_{t \in [T]}$.
%
Table~\ref{tab:results.path} shows the experimental results.
Compared with the results in
Table~\ref{tab:results},
we needed to solve more optimization problems
(denoted as $T$)
and hence the total computational cost is larger than simply finding an
$\veps$-approximate regularization parameter.
For large datasets D9 and D10 with $\veps = 0$,
we could not finish the computations
within 100 hours.
\begin{table}[H]
  \begin{center}
   \caption{Complexities and computational costs of approximate regularization path computation experiments.
   For each of the three options
   and
   $\veps \in \{0.10, 0.05, 0.01, 0\}$,
   the number of optimization problems solved (denoted as $T$)
   and the total computational costs (denoted as time)
   are listed.
   Note that,
   for {\bf op5},
   there are no results for $\veps = 0$.
   For D9 and D10 with $\veps = 0$,
   we could not finish the computations
   within 100 hours.
   }
\label{tab:results.path}
   \vspace*{1mm}
   \begin{scriptsize}
    \begin{tabular}{ r| c || r|r || r|r || l || r|r || r|r }
     & & \multicolumn{2}{|c||}{\bf op4} & \multicolumn{2}{|c||}{\bf op5} & & \multicolumn{2}{|c||}{\bf op4} & \multicolumn{2}{|c}{\bf op5} \\
     & & \multicolumn{2}{|c||}{(using $w^*_{\tilde{C}}$)} & \multicolumn{2}{|c||}{(using $\hat{w}_{\tilde{C}}$)} & & \multicolumn{2}{|c||}{(using $w^*_{\tilde{C}}$)} & \multicolumn{2}{|c}{\bf (using $\hat{w}_{\tilde{C}}$)}  \\ \cline{3-6} \cline{8-11}
     \lw{$~~~\veps~~~$}& & \lw{$~~~~T~~~~$} & ~~time~~ & \lw{$~~~~T~~~~$} & ~~time~~ & & \lw{$~~~~T~~~~$} & ~~time~~ & \lw{$~~~~T~~~~$} & ~~time~~ \\
      & & & ~~(sec)~~ & & ~~(sec)~~  & & & ~~(sec)~~ & & ~~(sec)~~  \\ \hline \hline
     0.10 & \multirow{4}{*}{D1}  &    91 &  0.208 &   96 & 0.073 & \multirow{4}{*}{D6} &    238 & 4.828 &  240 & 1.691 \\ \cline{1-1} \cline{3-6} \cline{8-11}
     0.05 &                      &   150 &  0.284 &  180 & 0.118 & &    503 & 9.185 &  507 & 3.518 \\ \cline{1-1} \cline{3-6} \cline{8-11}
     0.01 &                      &   698 &  1.063 & 2095 & 0.597 & &   2332 & 31.17 & 3300 & 17.16 \\ \cline{1-1} \cline{3-6} \cline{8-11}
        0 &                      &  6960 &  7.983 & \multicolumn{2}{|c||}{N.A.} & &  74767 & 836.7 & \multicolumn{2}{|c}{N.A.} \\ \hline \hline

     0.10 & \multirow{4}{*}{D2}  &   504 &  0.367 &  510 & 0.246 & \multirow{4}{*}{D7}  &    732 & 18.56 &  742 & 10.49 \\ \cline{1-1} \cline{3-6} \cline{8-11}
     0.05 &                      &   902 &  0.563 &  982 & 0.444 & &   1316 & 31.77 & 1385 & 18.88 \\ \cline{1-1} \cline{3-6} \cline{8-11}
     0.01 &                      &  4549 &  2.711 & 9404 & 2.365 & &   5820 & 118.4 & 7700 & 76.80 \\ \cline{1-1} \cline{3-6} \cline{8-11}
        0 &                      & 94612 &  68.31 & \multicolumn{2}{|c||}{N.A.} & & 1583578& 43212 & \multicolumn{2}{|c}{N.A.} \\ \hline \hline

     0.10 & \multirow{4}{*}{D3}  &   175 &  1.739 &  186 & 0.592 & \multirow{4}{*}{D8}  &    227 & 1.991 &  229 & 1.410 \\ \cline{1-1} \cline{3-6} \cline{8-11}
     0.05 &                      &   314 &  2.615 &  374 & 1.005 & &    469 & 3.987 &  475 & 2.872 \\ \cline{1-1} \cline{3-6} \cline{8-11}
     0.01 &                      &  1329 &  9.360 & 3248 & 3.409 & &   2382 & 17.95 & 2385 & 14.75 \\ \cline{1-1} \cline{3-6} \cline{8-11}
        0 &                      & 56123 &  292.3 & \multicolumn{2}{|c||}{N.A.} & & 397801 &  5481 &\multicolumn{2}{|c}{N.A.} \\ \hline \hline

     0.10 & \multirow{4}{*}{D4}  &     84 & 0.472 &   86 & 0.201 & \multirow{4}{*}{D9}  &    352 & 844.0 &  357 & 302.6 \\ \cline{1-1} \cline{3-6} \cline{8-11}
     0.05 &                      &    156 & 0.798 &  162 & 0.355 & &    717 &  1209 &  725 & 624.4 \\ \cline{1-1} \cline{3-6} \cline{8-11}
     0.01 &                      &    710 & 2.816 & 1218 & 1.497 & &   3741 &  4985 & 11631& 11185 \\ \cline{1-1} \cline{3-6} \cline{8-11}
        0 &                      &  14833 & 48.06 &\multicolumn{2}{|c||}{N.A.}& & \multicolumn{2}{|c||}{$>$ 100h} & \multicolumn{2}{|c}{N.A.} \\ \hline \hline

     0.10 & \multirow{4}{*}{D5}  &    136 & 0.527 &  138 & 0.185 & \multirow{4}{*}{D10}  &    189 & 145.5 &  200 & 45.18 \\ \cline{1-1} \cline{3-6} \cline{8-11}
     0.05 &                      &    283 & 0.936 &  286 & 0.368 & &    262 & 203.7 &  272 & 61.07 \\ \cline{1-1} \cline{3-6} \cline{8-11}
     0.01 &                      &   1561 & 3.840 & 2306 & 2.086 & &    832 & 524.7 &  851 & 179.7  \\ \cline{1-1} \cline{3-6} \cline{8-11}
        0 &                      &  50101 & 104.9 & \multicolumn{2}{|c||}{N.A.}& &   \multicolumn{2}{|c||}{$>$ 100h} &\multicolumn{2}{|c}{N.A.} 





    \end{tabular}
   \end{scriptsize}
  \end{center}
\end{table}

\section{Adaptation to cross-validation setup}
\label{app:cv}
All the methods presented above can be straightforwardly adapted to a cross-validation (CV) setup.
Consider
$k$-fold CV
where
$n$
instances
are divided into
$k$
disjoint subsets
$\{F_\kappa\}_{\kappa \in [k]}$
with almost equal size.
Let
$w(\kappa)^*_C$
be the optimal solution
trained without using the instances in
$F_\kappa$.
Then,
the $k$-fold CV error is defined as
\begin{align*}
 E_{k{\rm CV}}(C)
 :=
 \frac{1}{n}
 \sum_{\kappa \in [k]}
 \sum_{i \in F_\kappa}
 I
 \big(
 y_i  w(\kappa)^{*\top}_C x_i < 0
 \big),
\end{align*}
where,
note that,
the CV error is not a function of $w$,
but a function of $C$.
Our algorithm can find
an $\eps$-approximate regularization parameter
at which the
$k$-fold CV error is guaranteed to be no greater by
$\eps$
than the smallest possible $k$-fold CV error.
For each of the $k$ folds,
we can compute a validation error lower bound
as described before.
A lower bound of the entire $k$-fold CV error
can be obtained
by simply summing them up.
%


\clearpage

\end{document}